\newcommand{\Unif}{\mathrm{Uniform}}
\newcommand{\Laplace}{\mathrm{Laplace}}
\newcommand{\Expect}{\mathbb{E}}
\newcommand{\expect}[1]{\Expect\left[#1\right]}
\newcommand{\calA}{{\mathcal{A}}}
\newcommand{\calB}{{\mathcal{B}}}
\newcommand{\calD}{{\mathcal{D}}}
\newcommand{\calL}{{\mathcal{L}}}
\newcommand{\Th}{{^{\rm th}}}
\newtheorem{theorem}{Theorem}
\newtheorem{lemma}{Lemma}
\newtheorem{remark}{Remark}
\providecommand{\keywords}[1]{\textbf{\text{Index terms ---}} #1}
\begin{document}

\title{Estimating the Coefficients of a Mixture of Two Linear Regressions by Expectation Maximization}
\author[1]{Jason M. Klusowski\thanks{jason.klusowski@rutgers.edu}}
\author[2]{Dana Yang\thanks{xiaoqian.yang@yale.edu}}
\author[2]{W. D. Brinda\thanks{william.brinda@yale.edu}}
\affil[1]{Department of Statistics and Biostatistics, Rutgers University -- New Brunswick}
\affil[2]{Department of Statistics and Data Science, Yale University}

\maketitle

\begin{abstract}
We give convergence guarantees for estimating the coefficients of a symmetric mixture of two linear regressions by expectation maximization (EM). In particular, we show that the empirical EM iterates converge to the target parameter vector at the parametric rate, provided the algorithm is initialized in an unbounded cone. In particular, if the initial guess has a sufficiently large cosine angle with the target parameter vector, a sample-splitting version of the EM algorithm converges to the true coefficient vector with high probability. Interestingly, our analysis borrows from tools used in the problem of estimating the centers of a symmetric mixture of two Gaussians by EM.

We also show that the population EM operator for mixtures of two regressions is anti-contractive from the target parameter vector if the cosine angle between the input vector and the target parameter vector is too small, thereby establishing the necessity of our conic condition. Finally, we give empirical evidence supporting this theoretical observation, which suggests that the sample based EM algorithm performs poorly when initial guesses are drawn accordingly. Our simulation study also suggests that the EM algorithm performs well even under model misspecification (i.e., when the covariate and error distributions violate the model assumptions).
\end{abstract}

\keywords{Mixture models; expectation-maximization algorithm; iterative algorithms; clustering algorithms; regression analysis.}

\section{Introduction}\label{sec:introduction}

Mixtures of linear regressions are useful for modeling different linear relationships between input and response variables across several unobserved heterogeneous groups in a population. First proposed by \cite{quandt1978estimating} as a generalization of ``switching regressions'', this model has found broad applications in areas such as plant science \cite{turner2000estimating}, musical perception theory \cite{DeVeaux1989, Viele2002}, and educational policy \cite{ding2006using}.

In this paper, we consider estimating the model parameters in a symmetric two component mixture of linear regressions. Towards a theoretical understanding of this model, suppose we observe data $\calD_n = \{(X_i, Y_i)\}_{i=1}^n$, where
\begin{align} \label{eq:mixreg}
Y_i = R_i \langle\theta^*,X_i\rangle + \varepsilon_i,
\end{align}
$X_i \stackrel{i.i.d.}{\sim} N(0, I_d)$, $\varepsilon_i \stackrel{i.i.d.}{\sim} N(0, \sigma^2)$, $R_i \stackrel{i.i.d.}{\sim} \mbox{Rademacher}(1/2)$, and $ \{X_i\}, \{\varepsilon_i\} $, and $ \{R_i\}$ are independent of each other. In other words, each predictor variable is Gaussian, and the response is centered at either the $\theta^*$ or $-\theta^*$ linear combination of the predictor. The two classes are equally probable, and the label of each observation is unknown. We seek to estimate $\theta^*$ (or $-\theta^*$, which produces the same model distribution).

The likelihood function of the model
\begin{equation*}
\calL(\calD_n; \theta) = \prod_{i=1}^n\left[ \frac{1}{2}\psi(X_i)\psi_{\sigma}(Y_i - \langle\theta,X_i\rangle) + \frac{1}{2}\psi(X_i)\psi_{\sigma}(Y_i + \langle\theta,X_i\rangle)\right],
\end{equation*}
where $ \psi(x) = \frac{1}{(2\pi)^{d/2}}e^{-\|x\|^2/2} $ and $ \psi_{\sigma}(y) = \frac{1}{\sqrt{2\pi}\sigma}e^{-y^2/(2\sigma^2)} $,
is a multi-dimensional, multi-modal (it has many spurious local maxima), and nonconvex objective function, and hence direct maximization (e.g., grid search) is intractable. Even the population likelihood (in the infinite data setting) has global maxima at $ -\theta^* $ and $ \theta^* $, and a local minimum at the zero vector.  Given these computational concerns, other less expensive methods have been used to estimate the model coefficients. For example, mixtures of linear regressions can be interpreted as a particular instance of subspace clustering, since each regressor / regressand pair $ (X, Y) \in \mathbb{R}^{d+1} $ lies in the $ d $-dimensional subspace determined by their model parameter vectors ($ \theta^ * $ and $ -\theta^* $). When the covariates and errors are Gaussian, algebro-geometric and probabilistic interpretations of PCA \cite{vidal2005, tipping1999mixtures} motivate related clustering schemes, since there is an inherent geometric aspect to such mixture models. 

Another competitor is the Expectation-Maximization (EM) algorithm, which has been shown to have desirable empirical performance in various simulation studies \cite{DeVeaux1989}, \cite{Viele2002}, \cite{Faria2010}. Introduced in a seminal paper of Dempster, Laird, and Rubin \cite{Dempster1977}, the EM algorithm is a widely used technique for parameter estimation, with common applications in latent variable models (e.g., mixture models) and incomplete-data problems (e.g., corrupted or missing data) \cite{Beale1975}. It is an iterative procedure that monotonically increases the likelihood \cite[Theorem 1]{Dempster1977}. When the likelihood is not concave, it is well known that EM can converge to a non-global optimum \cite[page 97]{Wu1983}. However, recent work has side-stepped the question of whether EM reaches the likelihood maximizer, instead by directly working out statistical guarantees on its loss. For certain well-specified models, these explorations have identified regions of local contractivity of the EM operator near the true parameter so that, when initialized properly, the EM iterates approach the true parameter with high probability.

This line of research was spurred by \cite{Balakrishnan2014}, which established general conditions for which a ball centered at the true parameter would be a basin of attraction for the population version of the EM operator. For a large enough sample size, the difference (in that ball) between the sample EM operator and the population EM operator can be bounded such that the EM estimate approaches the true parameter with high probability. That bound is the sum of two terms with distinct interpretations. There is an \emph{algorithmic convergence} term $\gamma^t \| \theta^0 - \theta^* \|$ for initial guess $\theta^0$, truth $\theta^*$, and some modulus of contraction $\gamma \in (0, 1)$; this comes from the analysis of the population EM operator. The second term captures \emph{statistical convergence} and is proportional to the supremum norm of $ \sup_{\theta}\|M(\theta) - M_n(\theta)\| $, the difference between the population and sample EM operators, $ M $ and $ M_n $, respectively. This result is also shown for a ``sample-splitting" version of EM, where the sample is partitioned into batches and each batch governs a single step of the algorithm.

Our purpose here is to follow up on the analysis of \cite{Balakrishnan2014} by proving a larger basin of attraction for the mixture of two linear models and by establishing an exact probabilistic bound on the error of the sample-splitting EM estimate when the initial guess falls in the specified region. In particular, we show that

\begin{enumerate}[(a)]
\item The EM algorithm converges to the target parameter vector when it is initialized in a cone (defined in terms of the cosine similarity between the initial guess $ \theta^0 $ and the target model parameter $ \theta^* $).
\item The EM algorithm can fail to converge to $ \theta^* $ if the cosine similarity is too small.
\end{enumerate}

In related works, typically some variant of the mean value theorem is employed to establish contractivity toward the true parameter and the rate of geometric decay is then determined by relying heavily on the fact that initial guess belongs to a bounded set and is not too far from the target parameter vector (i.e., a ball centered at the target parameter vector). Our technique relies on Stein's Lemma, which allows us to reduce the problem to the two-dimensional case and exploit certain monotonicity properties of the population EM operator. Such methods allow one to be very careful and explicit in the analysis and more cleanly reveal the role of the initial conditions. These results cannot be deduced from preexisting works (such as \cite{Balakrishnan2014}), even by sharpening their analysis. Our improvements are not solely in terms of constants. Indeed, we will show that as long as the cosine angle between the initial guess and the target parameter vector (i.e., their degree of alignment) is sufficiently large, the EM algorithm converges to the target parameter vector $ \theta^* $. In particular, the norm of the initial guess can be arbitrarily large, provided the cosine angle condition is met.

In the machine learning community, mixtures of linear regressions are known as €œHierarchical Mixture of Experts€ (HME) and, there, the EM algorithm has also been employed \cite{Jordan1994}. The mixtures of linear regressions problem has also drawn recent attention from other scholars (e.g., \cite{Chen2014, chen2013convex, Yi2014, Chaganty2013, Zhong2016, sedghi2016provable, Yuanzhi2018}), although none of them have attempted to sharpen the EM algorithm in the sense that many works still require initialization is a small ball around the target parameter vector. For example, the general case with multiple components was considered in \cite{Zhong2016}, but initialization is still required to be in a ball around each of the true component coefficient vectors.

This paper is organized as follows. In Section~\ref{sec:population}, we explain the model and explain how the population EM operator is contractive toward the true parameter on a cone in $ \mathbb{R}^d $. We also show that the operator is not contractive toward the true parameter on certain regions of $ \mathbb{R}^d $. We connect our problem to phase retrieval in \prettyref{sec:phase} and borrow preexisting techniques to find a good initial guess in \prettyref{sec:initial}. Section~\ref{sec:sample} looks at the behavior of the sample-splitting EM operator in this cone and states our main result in the form of a high-probability bound. \prettyref{sec:proof} and \prettyref{sec:proofthm} are devoted to proving the contractivity of the population EM operator toward the target vector over a cone and proving our main result, respectively. A discussion of our findings, including  evidence of the failure of the EM algorithm for poor initial guesses from a simulated experiment, is provided in \prettyref{sec:discussion}. A simulation study of the EM algorithm under model misspecification is also given therein. Finally, more technical proofs are relegated to \prettyref{app:appendix}.

\section{The Empirical and Population EM Operator}\label{sec:population}

The EM operator for estimating $\theta^*$ (see \cite[page 6]{Balakrishnan2014} for a derivation) is
\begin{align}\label{eq:sample-em}
M_n(\theta) = \left(\frac{1}{n}\sum_{i=1}^n X_iX_i^{\top}\right)^{-1}\left[\frac{1}{n}\sum_{i=1}^n(2\phi(Y_i \langle \theta, X_i \rangle/\sigma^2)-1)X_iY_i\right],
\end{align}
where $\phi(z) = \frac{1}{1+e^{-2z}}$ is a horizontally stretched logistic sigmoid. Here $ \left(\frac{1}{n}\sum_{i=1}^n X_iX_i^{\top}\right)^{-1} $ is the inverse of the Gram matrix $ \frac{1}{n}\sum_{i=1}^n X_iX_i^{\top} $. In the limit with infinite data, the population EM operator replaces sample averages with expectations, and thus
\begin{align}\label{eq:population-em}
M(\theta) = 2\expect{\phi(Y \langle \theta, X \rangle/\sigma^2)XY}.
\end{align}

%Conveniently, this estimation can be reduced to the $\sigma = 1$ case. If we divide each response datum by $\sigma$:
%\begin{align*}
%Y_i/\sigma = R_i \langle\theta^*/\sigma,X_i\rangle + \varepsilon_i/\sigma,
%\end{align*}
%the unknown parameter to estimate becomes $\theta^*/\sigma$, and the noise has variance $1$. Inspection of \prettyref{eq:sample-em} and \prettyref{eq:population-em} confirms that the EM operators for the new problem are equal to $1/\sigma$ times the EM operators for the original problem. For instance, denoting the population EM operator of the new problem by $\widetilde{M}$,
%\begin{align*}
%\widetilde{M}(\theta/\sigma) &= 2 \Expect [\phi((Y/\sigma) \langle (\theta/\sigma), X \rangle)X(Y/\sigma)]\\
% &= \frac{2}{\sigma} \Expect [\phi(Y \langle \theta, X \rangle/\sigma^2)XY]\\
% &= \frac{1}{\sigma} M(\theta).
%\end{align*}
%The transformed problem's error is easily related to the original problem's error:
%\begin{align*}
%\| \widetilde{M}(\theta/\sigma) - \theta^*/\sigma \| &= \| \frac{1}{\sigma} M(\theta) - \theta^*/\sigma \|\\
% &= \frac{1}{\sigma} \| M(\theta) - \theta^* \|
%\end{align*}
%Thus, in the general case, the estimation error is exactly $\sigma$ times the estimation error of the normalized problem. %We use this observation to simplify the proof of \prettyref{lmm:main}, while stating our results for general $\sigma$.

As we mentioned in the introduction, \cite{Balakrishnan2014} showed that if the EM operator \prettyref{eq:sample-em} is initialized in a ball around $ \theta^* $ with radius proportional $ \theta^* $, then the EM algorithm converges to $ \theta^* $ with high probability. It is natural to ask whether this good region of initialization can be expanded, possibly allowing for initial guesses with unbounded norm. The purpose of this paper is to relax the aforementioned ball condition of \cite{Balakrishnan2014} and show that if the cosine angle between $ \theta^* $ and the initial guess is not too small, the EM algorithm also converges. We also simplify the analysis considerably and use only elementary facts about multivariate Gaussian distributions.
Our improvement is manifested in the set containment
\begin{equation*}
\{ \theta: \|\theta-\theta^*\| \leq \sqrt{1-\rho^2}\|\theta^*\| \} \subseteq \{ \theta : \langle \theta, \theta^* \rangle \geq \rho\|\theta\|\|\theta^*\| \}, \quad \rho \in [-1,1],
\end{equation*}
since for all $\theta$ in the set on the left side,
\begin{align}
\langle\theta,\theta^*\rangle \nonumber
& = \frac{1}{2}\left(\|\theta\|^2+\|\theta^*\|^2-\|\theta-\theta^*\|^2\right) \nonumber \\
& \geq \frac{1}{2}\left(\|\theta\|^2+\rho^2\|\theta^*\|^2\right) \nonumber \\
& = \rho\|\theta\|\|\theta^\star\| + \frac{1}{2}(\rho\|\theta^*\|-\|\theta\|)^2 \nonumber \\
& \geq \rho\|\theta\|\|\theta^\star\|. \label{eq:innerprodbound}
\end{align}

The conditions in \cite[Corollary 5]{Balakrishnan2014} require the initial guess $\theta^0$ to be at most $\|\theta^\star\|/32$ away from $\theta^\star$, which corresponds to $ \|\theta\| \leq (1+\sqrt{1-\rho^2})\|\theta^*\| $ and $ \rho = \sqrt{1-(1/32)^2} \approx 0.999 $, whereas our condition allows for the norm of $ \theta $ to be unbounded and $ \rho > 0.85 $.

Let $ \theta_0 $ be the unit vector in the direction of $ \theta $ and let $ \theta^{\perp}_0 $ be the unit vector that belongs to the hyperplane spanned by $ \{ \theta^*, \theta \} $ and orthogonal to $ \theta $ (i.e., $ \theta^{\perp}_0 \in \text{span}\{\theta, \theta^* \} $ and $ \langle \theta, \theta_0^{\perp} \rangle = 0 $). Let $ \theta^{\perp} = \|\theta\|\theta^{\perp}_0 $. We will later show in \prettyref{sec:proof} that $M(\theta)$ belongs to $\text{span}\{\theta,\theta^\star\}$, as illustrated in \prettyref{fig:thetas}. Denote the angle between $\theta^*$ and $\theta_0$ as $\alpha$, with $ \|\theta^*\|\cos \alpha = \langle \theta_0, \theta^* \rangle$ and $ \rho = \cos\alpha $. As we will see from the following results, as long as $ \cos \alpha $ is not too small, $M(\theta)$ is a contracting operation that is always closer to the truth $\theta^*$ than $\theta$. The next lemma allows us to derive a region of $ \mathbb{R}^d $ on which $ M $ is contractive toward $ \theta^* $. We defer its proof until \prettyref{sec:proof}.
\begin{figure}
  \centering
  \includegraphics{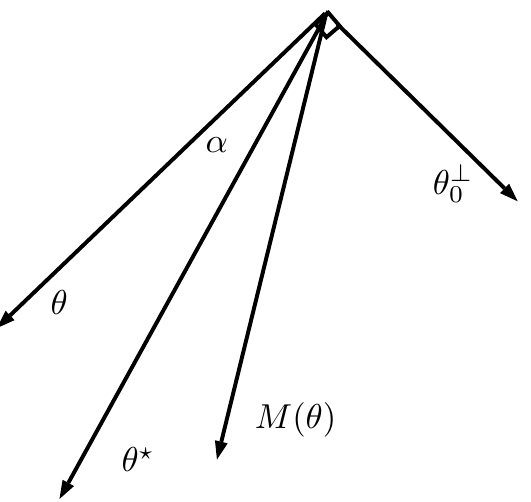}
  \caption{The population EM operator $ M(\theta) $ lies in the space spanned by $ \theta $ and $ \theta^* $. The unit vector $ \theta^{\perp}_0 $ lies in the space spanned by $ \theta $ and $ \theta^* $ and is perpendicular to $ \theta $. The vector $ \theta $ forms an angle $ \alpha $ with $ \theta^* $.}
\label{fig:thetas}
\end{figure}

\begin{lemma} \label{lmm:main}
For any $ \theta $ in $ \mathbb{R}^d $ with $ \langle \theta, \theta^* \rangle > 0 $,
\begin{equation} \label{eq:main_bound}
\|M(\theta)-\theta^*\| \leq \gamma\|\theta-\theta^*\|,
\end{equation}
where
\begin{equation} \label{eq:gamma}
\gamma = \sqrt{\kappa}\sqrt{1+4\left(\frac{|\langle \theta^{\perp}, \theta^* \rangle |+\sigma^2}{\langle \theta, \theta^* \rangle }\right)^2},
\end{equation}
and
\begin{equation} \label{eq:kappa}
\kappa^2 = \max\left\{1-\frac{|\langle\theta_0,\theta^\star\rangle|^2}{\sigma^2+\|\theta^*\|^2}, 1-\frac{\langle\theta,\theta^*\rangle}{\sigma^2+\langle\theta,\theta^*\rangle} \right\} < 1.
\end{equation}
\end{lemma}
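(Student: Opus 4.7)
The plan is to exploit the rotational symmetry of the Gaussian covariates to reduce the computation to two dimensions, symmetrize out the Rademacher variable using $2\phi - 1 = \tanh$, and then apply Stein's Lemma coordinate-wise. Specifically, I would first show that $M(\theta) \in \mathrm{span}\{\theta,\theta^*\}$: decomposing $X = X_{\parallel} + X_{\perp}$ with $X_{\parallel}$ the projection onto this span, $X_{\perp}$ is independent of $(X_{\parallel}, R, \varepsilon)$ with mean zero, and both $Y$ and $\phi(Y\langle\theta,X\rangle/\sigma^2)$ depend only on $X_{\parallel}$; hence the $X_{\perp}$ contribution to the defining expectation vanishes. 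With the orthonormal frame $\{e_1 = \theta_0,\, e_2 = \theta_0^{\perp}\}$ in hand, I write $\theta^* = a e_1 + b e_2$ with $a = \rho\|\theta^*\| > 0$ and $b = \langle\theta_0^{\perp},\theta^*\rangle$, while $U := \langle\theta,X\rangle = \|\theta\|X_1$ and $X_1, X_2 \iiddistr N(0,1)$.

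Second, I would eliminate the Rademacher variable. Splitting the conditional density of $Y\mid X$ and performing the substitution $y \mapsto -y$ on the $R=-1$ branch, together with the evenness of $\psi_\sigma$ and the identity $2\phi(z)-1 = \tanh(z)$, yields
\begin{equation*}
M(\theta) \;=\; \mathbb{E}\bigl[X\,\tilde Y\,\tanh(\tilde Y U/\sigma^2)\bigr], \qquad \tilde Y := \langle\theta^*,X\rangle + \varepsilon.
\end{equation*}
Since $\mathbb{E}[X\tilde Y] = \theta^*$, this produces the convenient rewriting $M(\theta) - \theta^* = \mathbb{E}[X\tilde Y(\tanh(\tilde Y U/\sigma^2) - 1)]$.

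Third, I would apply Stein's Lemma separately in each Gaussian coordinate. Along $e_2$, only $\tilde Y$ depends on $X_2$, so Stein gives $\langle M(\theta)-\theta^*, e_2\rangle = bA$, where
\begin{equation*}
A \;:=\; \mathbb{E}[\tanh(\tilde Y U/\sigma^2) - 1] + \sigma^{-2}\,\mathbb{E}[U\tilde Y\,\mathrm{sech}^2(\tilde Y U/\sigma^2)].
\end{equation*}
Along $e_1$, both $\tilde Y$ and $U$ depend on $X_1$, producing an extra gradient contribution: $\langle M(\theta)-\theta^*, e_1\rangle = aA + B$ with $B := \|\theta\|\sigma^{-2}\mathbb{E}[\tilde Y^2\,\mathrm{sech}^2(\tilde Y U/\sigma^2)]$. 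Hence $\|M(\theta)-\theta^*\|^2 = A^2\|\theta^*\|^2 + 2aAB + B^2$, reducing the task to bounding $|A|$ and $B$.

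Finally, I would bound these two quantities using Gaussian moment inequalities on the bivariate Gaussian $(U,\tilde Y)$. The two entries of the maximum defining $\kappa^2$ correspond to two different Cauchy--Schwarz splittings of the $\mathrm{sech}^2$-weighted expectations: one conditions on $X_1$ and exploits $\Var(\tilde Y\mid X_1) = \sigma^2 + b^2$ (producing the first alternative), while the other uses the orthogonal decomposition $\tilde Y = (a/\|\theta\|)U + \tilde W$ with $\tilde W \perp U$ and $\Var(\tilde W) = \sigma^2 + b^2$ (producing the second). Either branch produces a bound of the form $\kappa(1 + 4(\cdot)^2)\|\theta-\theta^*\|^2$ with the same outer multiplier, so taking the minimum delivers $\gamma^2 = \kappa(1 + 4(\cdot)^2)$. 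The main obstacle is the delicate bookkeeping required to consolidate the $\sigma^2$-dependent residuals from Stein's identity into precisely the numerator $(|\langle\theta^\perp,\theta^*\rangle| + \sigma^2)^2$, and to check that both Cauchy--Schwarz branches produce the same outer factor so that only the $\kappa$ piece differs; the hypothesis $\langle\theta,\theta^*\rangle>0$ is essential throughout to keep denominators positive and to coherently handle the sign structure of $A$.
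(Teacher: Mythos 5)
Your first three steps track the paper's proof closely: the reduction to $\mathrm{span}\{\theta,\theta^*\}$ (which the paper also obtains from Stein's Lemma, in the explicit form $M(\theta)=A\theta^*+B\theta$), the elimination of the Rademacher label via $2\phi-1=\tanh$ and the substitution $W=\langle\theta^*,X\rangle+\varepsilon$, and the two-dimensional Gaussian reduction; your coordinate-wise Stein computation of $aA+B$ and $bA$ is correct. The perpendicular component is also fine as you set it up, since $|\langle\theta_0^{\perp},\theta^*\rangle|=|\langle\theta_0^{\perp},\theta-\theta^*\rangle|\le\|\theta-\theta^*\|$, so a bound of order $\sqrt{\kappa}$ on your $A$ suffices there.

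The gap is in the component along $\theta_0$. You propose to finish by ``bounding $|A|$ and $B$'' with Cauchy--Schwarz, but separate bounds on $|A|$ and $B$ cannot produce the required factor of $\|\theta-\theta^*\|$: by self-consistency $M(\theta^*)=\theta^*$, so at $\theta=\theta^*$ the combination $aA+B$ must vanish even though neither $A$ nor $B$ does, and any estimate of the form $a|A|+|B|$ therefore stays bounded away from zero near $\theta=\theta^*$ and cannot be $\le\gamma\|\theta-\theta^*\|$. The cancellation between $aA$ and $B$ must be exploited, and this is where the substance of the paper's proof lies: it identifies $\langle\theta_0,M(\theta)\rangle$ and $\langle\theta_0,\theta^*\rangle$ (up to the factor $\sigma^2\Lambda/\|\theta\|$) with $h(\Lambda,\Gamma/\Lambda)$ and $h(\Gamma/\Lambda,\Gamma/\Lambda)$ for the one-dimensional EM operator $h(\alpha,\beta)=\expect{(2\phi(\alpha|Z_2|(Z_1+\beta|Z_2|))-1)|Z_2|(Z_1+\beta|Z_2|)}$, invokes the self-consistency identity $h(\beta,\beta)=\beta$, and applies the fundamental theorem of calculus in the first argument together with the derivative bound $\partial_\alpha h(\alpha,\beta)\le (2/\alpha^2)(\beta\min\{\alpha,\beta\}+1)^{-1/4}$. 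Integrating $2/\alpha^2$ from $\Gamma/\Lambda$ to $\Lambda$ yields the factor $|\Lambda^2-\Gamma|/(\Gamma\Lambda)$, and the elementary estimate $\sigma^4|\Lambda^2-\Gamma|\le\|\theta\|\left(|\langle\theta^{\perp},\theta^*\rangle|+\sigma^2\right)\|\theta-\theta^*\|$ is precisely what injects both $\|\theta-\theta^*\|$ and the numerator $|\langle\theta^{\perp},\theta^*\rangle|+\sigma^2$ into $\gamma$. A secondary inaccuracy: the two entries of the max in \prettyref{eq:kappa} do not arise from two Cauchy--Schwarz splittings but from the sub-Gaussian bound $\expect{2(1-\phi(\alpha(Z+\beta)))}\le e^{-\beta\min\{\alpha,\beta\}/2}$, whose $\min\{\alpha,\beta\}$, after averaging over $|Z_2|$ via the $\chi^2_1$ moment generating function, becomes $\min\{\Gamma,\Gamma^2/\Lambda^2\}$. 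Without the self-consistency/FTC device, or some equivalent way of extracting the cancellation, your argument does not close.
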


%As we will see, this constant $ \kappa $ is closely related to the contraction constant $ \gamma $ of the operator $ M(\theta) $. 

If we define the input signal-to-noise ratio as $\eta'=\|\theta\|/\sigma$ and model signal-to-noise ratio (SNR) as $\eta=\|\theta^\star\|/\sigma$ and use the fact that $ \|\theta^\star\|\cos\alpha = \langle \theta_0,\theta^* \rangle$, then the contractivity constant \prettyref{eq:gamma} can be rewritten as
\begin{equation} \label{eq:cont-const}
\max\left\{ \left(1-\frac{\eta^2\cos^2\alpha}{1+\eta^2}\right)^{1/4}, \left(1- \frac{\eta'\eta\cos\alpha}{1+\eta'\eta\cos\alpha}\right)^{1/4} \right\}\sqrt{1+4\left(\tan\alpha+\frac{1}{\eta'\eta\cos\alpha}\right)^2}.
\end{equation}
 
\begin{remark} \label{rmk:contract}
If $ \eta' \geq 20 $, $ \eta \geq 40 $, and $ \cos\alpha\geq 0.85$, then $ \kappa $ is bounded by a universal constant less than $ 1/2 $ and $ \gamma $ is bounded by a universal constant less than $ 1 $, implying the population EM operator $\theta^{t} \leftarrow M(\theta^{t-1})$ converges to the truth $\theta^*$ exponentially fast.
\end{remark}

\section{Relationship to Phase Retrieval} \label{sec:phase}

The problem of estimating the true parameter vector in a mixture of two linear regressions is related to the phase retrieval problem, where one has access to magnitude-only data according to the model
\begin{equation} \label{eq:phase}
\widetilde Y = |\langle \theta^*, X \rangle|^2 + \varepsilon.
\end{equation}
In the no noise case, i.e., $ \varepsilon \equiv 0 $, one can obtain the phase retrieval model from the symmetric two component mixture of linear regressions by squaring each response variable $ Y_i $ from \prettyref{eq:mixreg} and visa versa by setting $ Y_i = R_i\sqrt{Y'_i} $, where $ R_i \stackrel{i.i.d.}{\sim} \text{Rademacher}(1/2) $ is independent of the data $ \{ (X_i, \widetilde Y_i) \}_{i=1}^n $. Here the sample subsets giving rise to the model parameters $ \theta^* $ and $ -\theta^* $ are $ \{i : R_i\text{sgn}(\langle \theta^*, X_i \rangle) = 1 \} $ and $ \{i : R_i\text{sgn}(\langle \theta^*, X_i \rangle) = -1 \} $, respectively. Even in the case of noise, squaring each response variable and subtracting the variance $ \sigma^2 $ of the error distribution yields
\begin{equation} \label{eq:connection}
Y'_i = Y^2_i - \sigma^2 = |\langle \theta^*, X_i \rangle|^2 + 2R_i\varepsilon_i\langle \theta^*, X_i \rangle + (\varepsilon^2_i-\sigma^2) = |\langle \theta^*, X_i \rangle|^2 + \xi(X_i, R_i, \varepsilon_i),
\end{equation}
where $ \xi(X_i, R_i, \varepsilon_i) $ is a mean zero random variable with variance $ 4\sigma^2\|\theta^*\|^2 + 2\sigma^4 $. This is essentially the phase retrieval model \prettyref{eq:phase} with heteroskedastic errors. See also \cite[Section 3.5]{chen2013convex} for a similar reduction to the ``Noisy Phase Model'', where the measurement error is pre-added to the inner product and then squared, viz., $  |\langle \theta^*, X \rangle + \varepsilon|^2 $.

Recent algorithms used to recover $ \theta^* $ from $ (X, \widetilde Y) $ include PhaseLift \cite{candes2013phaselift}, PhaseMax \cite{goldstein2018, dhifallah2017fundamental}, PhaseLamp \cite{dhifallah2018, dhifallah2017} and Wirtinger flow \cite{cai2016optimal, candes2015}, to name a few. PhaseLift operates by solving a semi-definite relaxation of the nonconvex formulation of the phase retrieval problem. PhaseMax and PhaseLamp solve a linear program over a polytope via convex programming. Finally, Wirtinger flow is an iterative gradient-based method that requires proper initialization.
Parallel to our work, \cite{dhifallah2018, dhifallah2017} reveal that exact recovery (when $ n, d \rightarrow +\infty $) in PhaseMax is governed by a critical threshold \cite[Theorem 3]{dhifallah2018}, which is measured in terms of the cosine angle between the initial guess and the target parameter vector. Analogous to our \prettyref{lmm:negative} (which is asymptotic in the sense that $ n \rightarrow +\infty $), they prove that recovery can fail is this cosine angle is too small. PhaseLamp is an iterative variant of PhaseMax that allows for a smaller cosine angle criterion than the critical threshold from PhaseMax. Our setting is slightly more general than \cite{dhifallah2018, dhifallah2017}  in that we allow for measurement error and our bounds are non-asymptotic in $ n $ and $ d $.

\section{Initialization} \label{sec:initial}

\prettyref{thm:main_splitting_empirical} below requires the initial guess to have a good inner product with $ \theta^* $. But how should one initialize in practice? There is considerable literature showing the efficacy of initialization based on spectral \cite{Yi2014}, \cite{Chaganty2013}, \cite{Zhong2016} or Bayesian \cite{Viele2002} methods. For example, inspired by the link \prettyref{eq:connection} between phase retrieval and our problem, we can use the same spectral initialization method of \cite{candes2015} for the Wirtinger flow iterates (c.f., \cite{Yi2014} for a similar strategy). That is, set
\begin{equation} \label{eq:lam}
\lambda^2 = d\frac{\sum_{i=1}^n Y'_i}{\sum_{i=1}^n \|X_i\|^2},
\end{equation}
and take $ \theta^0 $ equal to be the eigenvector corresponding to the largest eigenvalue of
\begin{equation} \label{eq:eigen}
\frac{1}{n}\sum_{i=1}^n Y'_i X_i X_i^{\top},
\end{equation}
scaled so that $ \|\theta^0\| = \lambda $. According to \cite[Theorem 3.3]{candes2015}, we are guaranteed that with high probability $ \|\theta^0-\theta^*\| \leq \frac{1}{8}\|\theta^*\| $, and hence by \prettyref{eq:innerprodbound}, $ \langle \theta^0, \theta^* \rangle \geq \sqrt{1-(1/8)^2}\|\theta^0\|\|\theta^*\| \approx 0.992\|\theta^0\|\|\theta^*\| $ and $ \|\theta^0\| \geq (7/8)\|\theta^*\| $. Provided that $ \|\theta^*\| \geq (8/7)20\sigma $, we will see in \prettyref{thm:main_splitting_empirical} that this $ \theta^0 $ satisfies our criteria for a good initial guess. Although the joint distributions of $ (X, \widetilde Y) $ and $ (X, Y') $ are not exactly the same, for large $ n $, $ \frac{1}{n}\sum_{i=1}^n\xi(X_i, R_i, \varepsilon_i) \approx 0 $, and hence \prettyref{eq:lam} and \prettyref{eq:eigen} are approximately equal to the same quantity with $ Y'_i $ replaced by $ \widetilde Y_i $. 
%using this initial guess is still a good heuristic nonetheless.

The next lemma, proved in \prettyref{app:appendix}, shows that the initialization conditions in \prettyref{rmk:contract} are essentially necessary in the sense that contractivity of $ M $ toward $ \theta^* $ can fail for certain initial guesses that do not meet our cosine angle criterion. In contrast, it is known \cite{daskalakis2016ten, Xu2016} that the population EM operator for a symmetric mixture of two Gaussians $ Y \sim \frac{1}{2}N(\theta^*, \sigma^2 I_d) + \frac{1}{2}N(-\theta^*, \sigma^2 I_d) $ is contractive toward $ \theta^* $ on the entire half-plane defined by $ \langle \theta, \theta^* \rangle > 0 $.\footnote{Note that this is the best one can hope for: if $ \langle \theta, \theta^* \rangle < 0 $ (reps. $ \langle \theta, \theta^* \rangle = 0 $), then the population EM operator is contractive toward $ -\theta^* $ (resp. the zero vector). Thus, unless $ \langle \theta, \theta^*\rangle = 0 $ (i.e., $ \theta $ belongs to the hyperplane perpendicular to $\theta^*$), the population EM is contractive towards either model parameter $ -\theta^* $ or $ \theta^* $.} The disparity between the EM operators for the two models is revealed in the proof of the contractivity of $ M $ toward $ \theta^* $ (see \prettyref{sec:proof}). Indeed, we will see in \prettyref{rmk:smoothed} that the population EM operator for mixtures of regressions is essentially a ``stretched'' version of the population EM operator for Gaussian mixtures.

\begin{lemma} \label{lmm:negative}
There is a subset of $ \mathbb{R}^d $ with positive Lebesgue measure, each of whose members $ \theta $ satisfies $ \langle \theta, \theta^* \rangle > 0 $ and
\begin{equation*}
\|M(\theta) - \theta^*\| > \|\theta-\theta^*\|.
\end{equation*}
\end{lemma}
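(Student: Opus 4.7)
The plan is to construct $\theta$ of small norm whose direction is nearly orthogonal to $\theta^*$ (with a small but strictly positive inner product, so that the cosine angle is small), and to verify anti-contractivity by a direct Taylor expansion of $M$ about the origin. Since $2\phi(z) - 1 = \tanh(z) = z + O(z^3)$ near $z = 0$, combining $\mathbb{E}[Y^2 \mid X] = \langle\theta^*,X\rangle^2 + \sigma^2$ with the Gaussian identity $\mathbb{E}[(u^\top X)^2 XX^\top] = \|u\|^2 I_d + 2uu^\top$ yields
\begin{equation*}
M(\theta) = \tfrac{\|\theta^*\|^2 + \sigma^2}{\sigma^2}\,\theta + \tfrac{2\langle\theta^*,\theta\rangle}{\sigma^2}\,\theta^* + O(\|\theta\|^3) \qquad\text{as }\|\theta\| \to 0.
\end{equation*}
Decomposing $\theta = p\,\hat\theta^* + \theta_\perp$ with $\hat\theta^* = \theta^*/\|\theta^*\|$, $p = \langle\theta,\hat\theta^*\rangle$, $\theta_\perp \perp \theta^*$, and writing $a = (3\|\theta^*\|^2 + \sigma^2)/\sigma^2$, $b = (\|\theta^*\|^2 + \sigma^2)/\sigma^2$, a short computation gives
\begin{equation*}
\|M(\theta) - \theta^*\|^2 - \|\theta - \theta^*\|^2 = (a-1)p\bigl[(a+1)p - 2\|\theta^*\|\bigr] + (b^2-1)\|\theta_\perp\|^2 + O(\|\theta\|^3).
\end{equation*}
The key observation is that $b^2 - 1 = \|\theta^*\|^2(\|\theta^*\|^2 + 2\sigma^2)/\sigma^4 > 0$: the linearization of $M$ at the origin \emph{expands} every direction orthogonal to $\theta^*$ by the factor $b > 1$, and this expansion is what drives the anti-contractivity once the parallel component $p$ is forced to be negligible.

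To exhibit a set of positive measure, fix a small $\epsilon > 0$ and take
\begin{equation*}
\calU_\epsilon = \sth{\theta \in \mathbb{R}^d : 0 < \langle\theta,\hat\theta^*\rangle < \epsilon^{3},\ \tfrac{\epsilon}{2} < \|\theta_\perp\| < \epsilon }.
\end{equation*}
On $\calU_\epsilon$ the first bracket above is $O(\epsilon^3)$ while $(b^2-1)\|\theta_\perp\|^2 \geq (b^2-1)\epsilon^2/4$, so for $\epsilon$ sufficiently small (depending only on $\|\theta^*\|$ and $\sigma$) the right-hand side is strictly positive and $\|M(\theta) - \theta^*\| > \|\theta - \theta^*\|$ holds throughout $\calU_\epsilon$. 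Each $\theta \in \calU_\epsilon$ satisfies $\langle\theta,\theta^*\rangle = p\,\|\theta^*\| > 0$, and the set is a nonempty open subset of $\mathbb{R}^d$ (a product of an open interval in the $\hat\theta^*$-direction and an open spherical shell in $(\theta^*)^{\perp}$), hence has positive Lebesgue measure.

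The principal technical obstacle is to justify that the $O(\|\theta\|^3)$ remainder is genuine and uniform on $\calU_\epsilon$. Write $T = Y\langle\theta,X\rangle/\sigma^2$, so that the remainder equals $\mathbb{E}[(\tanh(T) - T)\,XY]$. Split the expectation according to whether $|T| \leq 1$, in which case $|\tanh(T)-T| \leq |T|^3/3$, or $|T| > 1$, in which case $|\tanh(T) - T| \leq 2|T|\cdot\mathbf{1}\{|T|>1\} \leq 2|T|^3$. Either way one obtains a bound of the form $\sigma^{-6}\,\mathbb{E}[|Y\langle\theta,X\rangle|^3\,\|X\|\,|Y|]$, which is $O(\|\theta\|^3)$ by Cauchy--Schwarz because all joint moments of $(X,Y)$ are finite. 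This propagates to an $O(\|\theta^*\|\cdot\|\theta\|^3) = O(\epsilon^3)$ error in the squared-distance comparison, strictly dominated by the $(b^2-1)\epsilon^2/4$ gap for small $\epsilon$. The remaining steps are routine quadratic-form bookkeeping.
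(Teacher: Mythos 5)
Your proof is correct, but it takes a genuinely different route from the paper's. The paper works exactly on the hyperplane $\langle\theta,\theta^*\rangle=0$, where the coefficient $A$ in the decomposition $M(\theta)=A\theta^*+B\theta$ vanishes identically, observes that $B\to 1+\|\theta^*\|^2/\sigma^2>1$ as $\|\theta\|\to 0$, and then passes to a positive-measure set of $\theta$ with $\langle\theta,\theta^*\rangle>0$ by a continuity/liminf argument that is left rather implicit (the liminf is taken along $\|\theta\|=r$ but the final set allows $0<\|\theta\|<r$). You instead linearize $M$ at the origin via $2\phi(z)-1=\tanh(z)=z+O(z^3)$ together with $\Expect[Y^2\mid X]=\langle\theta^*,X\rangle^2+\sigma^2$ and the Gaussian fourth-moment identity, obtaining $M(\theta)\approx b\,\theta+(2\langle\theta^*,\theta\rangle/\sigma^2)\,\theta^*$ with $b=(\|\theta^*\|^2+\sigma^2)/\sigma^2$. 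This recovers exactly the same mechanism --- the component of $\theta$ orthogonal to $\theta^*$ is dilated by the factor $b>1$, and your coefficient $2\langle\theta^*,\theta\rangle/\sigma^2$ is precisely the paper's $A$, which is negligible when the parallel component is small --- but it makes the anti-contractive region an explicit open set and replaces the continuity step by a uniform, quantitative bound on the Taylor remainder (your $|\tanh(T)-T|\lesssim|T|^3$ splitting is sound and does yield an $O(\|\theta\|^3)$ error uniformly over directions). What your version buys is an explicit construction and a self-contained justification of the perturbation off the orthogonal hyperplane; what the paper's buys is brevity. Two minor remarks: your set $\calU_\epsilon$ (like the paper's construction) is nonempty only for $d\ge 2$, which is implicit in the whole cosine-angle setup; and the bracket $(a-1)p\left[(a+1)p-2\|\theta^*\|\right]$ is in fact negative on $\calU_\epsilon$, but of order $\epsilon^3$, so it is indeed dominated by the positive term $(b^2-1)\epsilon^2/4$ exactly as you claim.
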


While this result does not generally imply that the empirical iterates $  \theta^{t} \leftarrow M_n(\theta^{t-1}) $ will fail to converge to $ \theta^* $ for $  \langle \theta^0, \theta^* \rangle > 0 $, it does suggest that difficulties may arise in this regime. Indeed, the discussion in \prettyref{sec:discussion} gives empirical evidence for this theoretical observation.

\section{Main Theorem}\label{sec:sample}

As in \cite{Balakrishnan2014}, we analyze a sample-splitting version of the EM algorithm, where for an allocation of $ n $ samples and $ T $ iterations, we divide the data into $ T $ subsets of size $ \lfloor n/T \rfloor $. We then perform the updates $ \theta^{t} \leftarrow M_{n/T}(\theta^{t-1}) $, using a new subset of samples to compute $ M_{n/T}(\theta) $ at each iteration. The advantage of sample-splitting is purely for ease of analysis. In particular, conditional on the portion of data used to construct $ M_{n/T} $ at iteration $ t $, the distribution of $ \theta^{t} $ depends only on the other portion of the data through $ \theta^{t-1} $. For the next theorem, let $ \eta^0 = \|\theta^0\|/\sigma $ denote the initial SNR and $ \eta = \|\theta^*\|/\sigma $ denote the model SNR.

\begin{theorem} \label{thm:main_splitting_empirical}
Let $ \langle \theta^0, \theta^* \rangle > \rho\|\theta^0\|\|\theta^*\| $ for $ \rho > 0.85 $, $ \eta^0 \geq 20 $, and $ \eta \geq 40 $. Fix $ \delta \in (0, 1) $. Suppose furthermore that $ n \geq \max\{ cd\log(T/\delta), c' \} $ for some positive universal constant $ c $ and positive constant $ c' = c'(\rho, \sigma, \|\theta^*\|, \|\theta^0\|) $. Then there exists a universal modulus of contraction $ \gamma \in (0,1) $ and a positive universal constant $ C $ such that the sample-splitting empirical EM iterates $ (\theta^{t})_{t=1}^T $ based on $ n/T $ samples per step satisfy
\begin{equation*}
\|\theta^t-\theta^*\| \leq \gamma^t \|\theta^0 - \theta^* \| + \frac{C\sqrt{\sigma^2+\|\theta^*\|^2}}{1-\gamma}\sqrt{\frac{dT\log(T/\delta)}{n}},
\end{equation*}
with probability at least $ 1-\delta $.
\end{theorem}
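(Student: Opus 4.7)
The plan is to combine the population contractivity from \prettyref{lmm:main} with a per-iteration statistical perturbation bound, leveraging the fact that sample splitting makes $\theta^t$ independent of the batch used to compute $M_{n/T}$ at step $t+1$. Starting from the triangle inequality
\begin{equation}
\|\theta^{t+1}-\theta^*\| \leq \|M(\theta^t)-\theta^*\| + \|M_{n/T}(\theta^t)-M(\theta^t)\|,
\end{equation}
the first term is a population contraction error and the second a statistical error. Provided $\theta^t$ meets the cosine-angle condition with $\rho > 0.85$ and the SNR hypotheses, \prettyref{lmm:main} and \prettyref{rmk:contract} bound the first term by $\gamma\|\theta^t-\theta^*\|$ for a universal $\gamma\in(0,1)$. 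If the second term is controlled by some $\varepsilon_{n,T,\delta}$ uniformly in $t\in\{1,\dots,T\}$ on an event of probability at least $1-\delta$, then unrolling and summing the geometric series produces $\|\theta^t-\theta^*\|\leq \gamma^t\|\theta^0-\theta^*\| + \varepsilon_{n,T,\delta}/(1-\gamma)$, matching the claim.

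For the per-iteration statistical bound, I condition on all batches used through iteration $t$, so that $\theta^t$ is deterministic and the batch at step $t+1$ is a fresh independent sample of size $m:=\lfloor n/T\rfloor$. Decompose
\begin{equation}
M_{m}(\theta^t) = \widehat\Sigma_{m}^{-1}\,\widehat u_{m}(\theta^t), \qquad \widehat u_{m}(\theta) = \frac{1}{m}\sum_{i=1}^{m}\bigl(2\phi(Y_i\langle\theta,X_i\rangle/\sigma^2)-1\bigr)X_iY_i,
\end{equation}
and note $M(\theta^t) = 2\expect{\phi(Y\langle\theta^t,X\rangle/\sigma^2)XY}$. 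Standard Gaussian/Wishart matrix concentration yields $\|\widehat\Sigma_{m}-I_d\|\lesssim \sqrt{d\log(T/\delta)/m}$ with probability at least $1-\delta/(2T)$ whenever $m\gtrsim d\log(T/\delta)$. Each summand of $\widehat u_m$ is sub-exponential with scale of order $\sqrt{\sigma^2+\|\theta^*\|^2}$, since $|2\phi-1|\leq 1$, $X_i$ is standard Gaussian, and $Y_i$ is sub-Gaussian with parameter $\sqrt{\sigma^2+\|\theta^*\|^2}$. A vector Bernstein inequality therefore gives $\|\widehat u_{m}(\theta^t)-\tfrac{1}{2}M(\theta^t)\|\leq C\sqrt{\sigma^2+\|\theta^*\|^2}\sqrt{d\log(T/\delta)/m}$ with probability at least $1-\delta/(2T)$. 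Multiplying by the Gram inverse (whose operator norm is at most $2$ on the above event) and union-bounding over $t=1,\dots,T$ yields $\varepsilon_{n,T,\delta}=O\bigl(\sqrt{\sigma^2+\|\theta^*\|^2}\sqrt{dT\log(T/\delta)/n}\bigr)$. The additional condition $n\geq c'(\rho,\sigma,\|\theta^*\|,\|\theta^0\|)$ absorbs the lower-order cross terms coming from the Gram perturbation.

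The main obstacle is verifying that the cone condition $\langle\theta^t,\theta^*\rangle \geq \rho\|\theta^t\|\|\theta^*\|$ with $\rho>0.85$ is preserved along the entire trajectory, because this is what licenses the inductive use of \prettyref{lmm:main}. I argue this by induction on $t$, on the good event from the previous paragraph. At the population level, the two-dimensional reduction underlying \prettyref{lmm:main} (using $M(\theta)\in\mathrm{span}\{\theta,\theta^*\}$) reduces the check to a direct calculation in that plane showing that the cosine angle with $\theta^*$ does not decrease under $M$, so $M$ sends the cone into itself. At the empirical level, $\theta^{t+1}$ differs from $M(\theta^t)$ by at most $\varepsilon_{n,T,\delta}$, and the lower bound $n\geq c'$ is precisely what is needed to ensure this deviation is too small to push either the cosine angle below $0.85$ or the norm below the SNR threshold in \prettyref{rmk:contract}. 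Moreover, once $\|\theta^t-\theta^*\|$ drops below $\sqrt{1-\rho^2}\|\theta^*\|$---which occurs after finitely many steps by the geometric contraction---the cone condition becomes automatic through the ball-cone inclusion established just above \prettyref{eq:innerprodbound}, and the remainder of the trajectory lies trivially in the cone. Telescoping the recursion and plugging in the universal $\gamma$ from \prettyref{rmk:contract} completes the argument.
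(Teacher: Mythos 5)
Your proposal follows the same architecture as the paper's proof: triangle-inequality decomposition into a population contraction term (from \prettyref{lmm:main} and \prettyref{rmk:contract}) and a statistical perturbation term, a union bound over the $T$ sample-split batches, invariance of the cone under the empirical operator, and unrolling of the resulting recursion. The one substantive difference is that you sketch the concentration step (Wishart control of the Gram matrix plus a vector Bernstein bound on the weighted sum) explicitly, whereas the paper simply invokes the proof of Corollary 5 of Balakrishnan et al.\ with the upper bound $\|M(\theta)\|\leq\sqrt{\sigma^2+3\|\theta^*\|^2}$ substituted in; your sketch is plausible and, if carried out, would make the argument more self-contained.

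There is, however, one genuine soft spot in your invariance step. You assert that the population operator ``does not decrease'' the cosine angle with $\theta^*$ and then claim that $n\geq c'$ makes the empirical deviation too small to push the iterate out of the cone. If the population guarantee were only non-decrease, this would fail: an iterate sitting exactly at cosine $\rho$ would be mapped to cosine $\geq\rho$ with no slack, and any nonzero perturbation could then exit the cone, with no finite $n$ sufficing at every step. What the paper actually proves (\prettyref{lmm:Mbounds}) is a \emph{quantitative strict improvement}: the cosine angle of $M(\theta)$ with $\theta^*$ is at least $(1+\Delta)\rho$ for an explicit $\Delta=\Delta(\rho,\sigma,\|\theta^*\|,\|\theta\|)>0$, obtained from the lower bound on $A/B$ in Lemma A.5, together with two-sided norm bounds $\|\theta^*\|(1-\kappa)\leq\|M(\theta)\|\leq\sqrt{\sigma^2+3\|\theta^*\|^2}$. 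It is precisely this margin $\Delta$ (and the norm margin $40\sigma(1-\kappa)$ versus the threshold $20\sigma$) that the condition $\epsilon_M(n,\delta)\leq\min\{20\sigma(1-2\kappa),\tfrac{1-\kappa}{2}\Delta\rho\|\theta^*\|\}$, hence $n\geq c'$, is calibrated against. Your fallback observation (that the cone condition becomes automatic once $\|\theta^t-\theta^*\|\leq\sqrt{1-\rho^2}\|\theta^*\|$) covers the late iterations but not the early ones, so the margin lemma is genuinely needed; supplying it would close the gap and recover the paper's argument.
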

Note that $ T $ governs the number of iterations of the EM operator; if it is too small, the term $ \gamma^t \|\theta^0 - \theta^* \| $ from \prettyref{thm:main_splitting_empirical} may fail to reach the parametric rate. Hence, $ T $ must scale like $ \frac{\log(n/d)}{\log(1/\gamma)} $.

We will prove \prettyref{thm:main_splitting_empirical} in \prettyref{sec:proofthm}. The main aspect of the analysis lies in showing that $ M_n $ satisfies an invariance property, i.e., $ M_n(\calA) \subseteq \calA $, where $ \calA $ is a set on which $ M $ is contractive toward $ \theta^* $. The algorithmic error $ \gamma^t \|\theta^0 - \theta^* \| $ is a result of repeated evaluation of the population EM operator $ \theta^t \leftarrow M(\theta^{t-1}) $ and the contractivity of $ M $ towards $ \theta^* $ from \prettyref{lmm:main}. The stochastic error $ \frac{C\sqrt{\sigma^2+\|\theta^*\|^2}}{1-\gamma}\sqrt{\frac{dT\log(T/\delta)}{n}} $ is obtained from a high-probability bound on $ \max_{t\in[T]}\|M_{n/T}(\theta^t)-M(\theta^t)\| $, which is contained in the proof of \cite[Corollary 5]{Balakrishnan2014}).

\section{Proof of \prettyref{lmm:main}}\label{sec:proof}

If $ W = \langle \theta^*, X \rangle + \varepsilon $, a few applications of Stein's Lemma~\cite[Lemma~1]{Stein1981} yields
\begin{align}
M(\theta) & = \expect{(2\phi(W \langle \theta, X \rangle/\sigma^2)-1)XW} \nonumber \\
& = \theta^*(\expect{2\phi(W \langle \theta, X \rangle/\sigma^2)+2(W\langle \theta, X \rangle /\sigma^2) \phi^{\prime}(W \langle \theta, X \rangle/\sigma^2)-1}) \nonumber \\ & \qquad + \theta\expect{2(W^2/\sigma^2)\phi^{\prime}(W \langle \theta, X \rangle/\sigma^2)}. \label{eq:span}
\end{align}
Letting
\begin{equation} \label{eq:A}
A = \expect{2\phi(W \langle \theta, X \rangle/\sigma^2)+2(W\langle \theta, X \rangle /\sigma^2) \phi^{\prime}(W \langle \theta, X \rangle/\sigma^2)-1},
\end{equation}
and
\begin{equation} \label{eq:B}
B = \expect{2(W^2/\sigma^2)\phi^{\prime}(W \langle \theta, X \rangle/\sigma^2)},
\end{equation}
we see that $ M(\theta) = \theta^*A + \theta B $ belongs to $ \text{span}\{\theta, \theta^* \} = \{ \lambda_1 \theta + \lambda_2 \theta^*, : \lambda_1, \lambda_2 \in \mathbb{R} \} $. This is a crucial fact that we will exploit multiple times.

Observe that for any $ a $ in $ \text{span}\{\theta, \theta^* \} $,
\begin{equation*}
a = \langle \theta_0, a \rangle \theta_0 + \langle \theta_0^{\perp}, a \rangle \theta_0^{\perp},
\end{equation*}
and
\begin{equation*}
\|a\|^2 = |\langle \theta_0, a \rangle|^2 + |\langle \theta_0^{\perp}, a \rangle |^2.
\end{equation*}
Specializing this to $ a = M(\theta)-\theta^* $ yields
\begin{equation*}
\|M(\theta)-\theta^*\|^2 = |\langle \theta_0, M(\theta)-\theta^* \rangle|^2 + |\langle \theta_0^{\perp}, M(\theta)-\theta^* \rangle |^2.
\end{equation*}
The strategy for establishing contractivity of $ M(\theta) $ toward $ \theta^* $ will be to show that the sum of $ |\langle \theta_0, M(\theta)-\theta^* \rangle|^2 $ and  $ |\langle \theta_0^{\perp}, M(\theta)-\theta^* \rangle|^2 $ is less than $ \gamma^2\|\theta-\theta^*\|^2 $. This idea was used in \cite{daskalakis2016ten} to obtain contractivity of the population EM operator for a mixture of two Gaussians. Due to the similarity of the two problems, it turns out that many of the same ideas transfer to our (more complicated) setting.

To reduce this $ (d+1) $-dimensional problem $ (X, Y) \in \mathbb{R}^{d+1} $ to a $ 2 $-dimensional problem $ (Z_1, Z_2) \in \mathbb{R}^2 $, we first show that
\begin{equation*}
W \langle \theta, X \rangle/\sigma^2 \stackrel{{\mathcal{D}}}{=} \Lambda Z_1|Z_2| + \Gamma Z^2_2,
\end{equation*}
where $ Z_1, Z_2 \stackrel{i.i.d.}{\sim} N(0,1) $. The coefficients $\Gamma$ and $\Lambda$ are
\begin{equation*}
\Gamma = \langle \theta, \theta^* \rangle/\sigma^2
\end{equation*}
and
\begin{equation*}
\Lambda^2 = (\|\theta\|^2/\sigma^4)(\sigma^2+\|\theta^*\|^2) - \Gamma^2 = (\|\theta\|^2/\sigma^4)(\sigma^2+|\langle \theta^{\perp}_0, \theta^* \rangle |^2).
\end{equation*}
This is because of the distributional equality
\begin{equation} \label{eq:dist1}
(W, \langle \theta, X \rangle/\sigma^2) \stackrel{{\mathcal{D}}}{=} \left(\sqrt{\sigma^2+\|\theta^*\|^2}Z_2,\; \frac{\Lambda}{\sqrt{\sigma^2+\|\theta^*\|^2}}Z_1+\frac{\Gamma}{\sqrt{\sigma^2+\|\theta^*\|^2}}Z_2\right).
\end{equation}
%\begin{equation*}
%(W,\; \langle \theta, X \rangle) \stackrel{{\mathcal{D}}}{=} ( \frac{\Lambda}{\|\theta\|}Z_1+\frac{\Gamma}{\|\theta\|}Z_2,\; \|\theta\|Z_2).
%\end{equation*}
Note further that $\Lambda Z_1Z_2 + \Gamma Z^2_2  \stackrel{{\mathcal{D}}}{=} \Lambda Z_1|Z_2| + \Gamma Z^2_2 $ because they have the same moment generating function. Using this, we deduce that
\begin{equation} \label{eq:dist2}
W \langle \theta, X \rangle/\sigma^2 \stackrel{{\mathcal{D}}}{=} \Lambda Z_1|Z_2| + \Gamma Z^2_2.
\end{equation}

\prettyref{lmm:ABbounds} implies that
\begin{equation*}
(1-\kappa )\langle \theta^{\perp}_0, \theta^* \rangle \leq \langle \theta^{\perp}_0, M(\theta) \rangle \leq (1+\sqrt{\kappa})\langle \theta^{\perp}_0, \theta^* \rangle,
\end{equation*}
and consequently,
\begin{equation} \label{eq:perp_inequality}
|\langle \theta^{\perp}_0, M(\theta) - \theta^* \rangle| \leq \sqrt{\kappa}|\langle \theta^{\perp}_0, \theta -\theta^* \rangle| \leq \sqrt{\kappa}\|\theta-\theta^*\|.
\end{equation}

%Next, we note that $ \Lambda^2 \rightarrow \Gamma $ as $ \theta \rightarrow \theta^* $. In fact,
Next, we note that
\begin{align*}
\sigma^4|\Lambda^2-\Gamma| 
& = |\|\theta\|^2(\sigma^2+|\langle \theta^{\perp}_0, \theta^* \rangle |^2) - \sigma^2\langle \theta, \theta^* \rangle| \\
& \leq  \|\theta\|^2|\langle \theta^{\perp}_0, \theta^* \rangle |^2 + \sigma^2|\langle \theta, \theta-\theta^*\rangle| \\
& \leq \|\theta\|(|\langle \theta^{\perp}, \theta^* \rangle |+\sigma^2)\|\theta-\theta^*\|.
\end{align*}

Finally, define
\begin{equation*}
h(\alpha, \beta) = \expect{(2\phi(\alpha |Z_2|(Z_1 + \beta |Z_2|))-1)(|Z_2|(Z_1 + \beta |Z_2|))}.
\end{equation*}
Note that by definition of $ h $, $ h(\Lambda, \frac{\Gamma}{\Lambda}) = \frac{\langle \theta, M(\theta) \rangle}{\Lambda} $. In fact, $ h $ is the one-dimensional population EM operator for this model when $ \theta^* = \beta $ and $ \sigma^2 = 1 $.
By the self-consistency property of EM~\cite[page 79]{McLachlan2007}, $ h(\beta, \beta) = \beta $. Translating this to our problem, we have that $ h(\frac{\Gamma}{\Lambda}, \frac{\Gamma}{\Lambda}) = \frac{\Gamma}{\Lambda} = \frac{\langle \theta, \theta^* \rangle }{\sigma^2\Lambda} $. Since $ h(\Lambda, \frac{\Gamma}{\Lambda}) - h(\frac{\Gamma}{\Lambda}, \frac{\Gamma}{\Lambda}) = \int_{\frac{\Gamma}{\Lambda}}^{\Lambda}\frac{\partial h}{\partial \alpha} h(\alpha, \frac{\Gamma}{\Lambda})d\alpha $, we have from  \prettyref{lmm:derivative},
\begin{align*}
|\langle \theta_0, M(\theta)-\theta^* \rangle|
& \leq \frac{\sigma^2\Lambda}{\|\theta\|} \left|\int_{\frac{\Gamma}{\Lambda}}^{\Lambda}\frac{\partial}{\partial \alpha} h\left(\alpha, \frac{\Gamma}{\Lambda}\right)d\alpha\right| \\
& \leq \frac{2\sqrt{\kappa}\sigma^2\Lambda}{\|\theta\|} \left|\int_{\frac{\Gamma}{\Lambda}}^{\Lambda}\frac{d\alpha}{\alpha^2}\right| \\
& = \frac{2\sigma^2\sqrt{\kappa}|\Lambda^2-\Gamma|}{\Gamma\|\theta\|} \\
& \leq 2\sqrt{\kappa}\left(\frac{|\langle \theta^{\perp}, \theta^* \rangle |+\sigma^2}{\langle \theta, \theta^* \rangle }\right)\|\theta - \theta^*\|.
\end{align*}

Combining this with inequality \prettyref{eq:perp_inequality} yields \prettyref{eq:main_bound}. This completes the proof of \prettyref{lmm:main}.

\begin{remark} \label{rmk:smoothed}
The function $ h $ is related to the EM operator for the one-dimensional symmetric mixture of two Gaussians model $ Y \sim \frac{1}{2}N(-\beta, 1) + \frac{1}{2}N(\beta, 1) $. One can derive that (see~\cite[page 11]{Klusowski2016}) the population EM operator is 
\begin{equation*}
g(\alpha, \beta) = \expect{(2\phi(\alpha(Z_1+\beta))-1)(Z_1+\beta)}.
\end{equation*}
Then $ h(\alpha, \beta) $ is a ``stretched'' version of $ g(\alpha, \beta) $ as seen through the identity
\begin{equation*}
h(\alpha, \beta) = \expect{|Z_2|g(\alpha|Z_2|, \beta|Z_2|)}.
\end{equation*}

In light of this relationship, it is perhaps not surprising that the EM operator for the mixture of linear regressions problem also enjoys a large basin of attraction. 

%Thus, $ h $ is a ``stretched'' and ``scaled'' version of $ g $ by a random dilation factor $ |Z_2| $.

On the other hand, from \cite[page 11]{Klusowski2016}, the population EM operator $ \widetilde M $ for the symmetric two component mixture of Gaussians $ Y \sim \frac{1}{2}N(\theta^*, \sigma^2 I_d) + \frac{1}{2}N(-\theta^*, \sigma^2 I_d) $, is equal to
\begin{equation*}
\widetilde M(\theta) = 2\expect{Y\phi(\langle Y, \theta \rangle/\sigma^2)} = \theta^* \widetilde A + \theta \widetilde B,
\end{equation*}
where $ \widetilde A = \expect{2\phi(\langle \theta, \theta^* \rangle/\sigma^2 + \|\theta\|Z_1/\sigma)-1} $ and $ \widetilde B = 2\expect{\phi'(\langle \theta, \theta^* \rangle/\sigma^2 + \|\theta\|Z_1/\sigma)} $. 

%Similar to \prettyref{eq:decompose}, when $ \langle \theta, \theta^* \rangle = 0 $,
%\begin{equation*}
%\|\widetilde M(\theta) - \theta^*\|^2 = \|\theta-\theta^*\|^2 + (\widetilde B^2-1)\|\theta\|^2.
%\end{equation*}
%However, $ \widetilde B $ is always at most $ 1 $, and hence $ \|\widetilde M(\theta) - \theta^*\|^2 \leq \|\theta-\theta^*\|^2 $. So $ \widetilde M $ does not exhibit the repellant behavior of $ M $ in \prettyref{lmm:negative}.

Compare the values of $ \widetilde A $ and $ \widetilde B $ with $ A $ and $ B $ from \prettyref{eq:A} and \prettyref{eq:B}. We see that $ M $ is essentially a ``stretched'' and ``scaled'' version of $ \widetilde M $ by the random dilation factors $ |Z_2|\sqrt{1+|\langle \theta^{\perp}_0, \theta^*\rangle|^2/\sigma^2} $ and $ |Z_2|\sqrt{1+\|\theta^*\|^2/\sigma^2} $. As will be seen in the proof \prettyref{lmm:negative} in \prettyref{app:appendix}, this additional source of variability causes the repellant behavior of $ M $ in \prettyref{lmm:negative}.
\end{remark}

\begin{remark}
Recently in \cite{brutzkus2017globally}, the authors analyzed gradient descent for a single-hidden layer convolutional neural network structure with no overlap and Gaussian input. In this setup, we observe {\it i.i.d.} data $ \{(X_i, Y_i)\}_{i=1}^n $, where $ Y_i = f(X_i,w) + \varepsilon_i $ and $ X_i \sim N(0, I_d) $ and $ \varepsilon_i \sim N(0, \sigma^2) $ are independent of each other. The neural network has the form $ f(x,w) = \frac{1}{k}\sum_{j=1}^k\max\{0, \langle w_j, x \rangle \} $ and the only nonzero coordinates of $ w_j $ are in the $ j^{\Th} $ successive block of $ d/k $ coordinates and are equal to a fixed $ d/k $ dimensional filter vector $ w $. One desires to minimize the risk $ \ell(w) = \expect{(f(X,w)-f(X,w^{\star}))^2} $. Interestingly, the gradient of $ \ell(w) $ belongs to the linear span of $ \omega $ and $ \omega^{\star} $, akin to our $ M(\theta) \in \text{span}\{\theta, \theta^* \} $ (and also in the Gaussian mixture problem \cite{Klusowski2016}). This property also plays a critical role in their analysis.

%One can use an alternative scheme to gradient descent using a simple method of moments estimator based on the identity $ 2\expect{X\max\{0, \langle w, X \rangle\}} = w $. We observe that $ \widehat{w} = \frac{2}{n}\sum_{i=1}^n X_i Y_i $ is an unbiased estimator of $ \frac{1}{k}\sum_{j=1}^k w^{\star}_j $ (in fact, $ w^{\star} $ need not be the same across successive blocks) and its mean square error is less than a multiple of $ \frac{d}{n}(\|w^{\star}\|^2+\sigma^2)\log(1/\delta) $ with probability at least $ 1-\delta $. Our problem, however, is not directly amenable to such a method.
\end{remark}

\section{Proof of \prettyref{thm:main_splitting_empirical}}\label{sec:proofthm}

The first step of the proof is to show that the empirical EM operator satisfies $ M_n(\calA) \subset \calA $, where $ \calA $ is a set on which $ M $ is contractive toward $ \theta^* $. In other words, the empirical EM iterates remain in a set where $ M(\theta) $ is closer to $ \theta^* $ than its input $ \theta $. To this end, define the set $ \calA = \{ \theta: \langle \theta, \theta^* \rangle > \rho\|\theta\|\|\theta^*\|, \|\theta\| \geq 20\sigma \} $. By \prettyref{rmk:contract}, the stated conditions on $ \rho $, $ \|\theta\| $, and $ \|\theta^*\| $ ensure that $ M $ is contractive toward $ \theta^* $ on $ \calA $ and that $ \kappa < 1/2 $.

Next, we use  \prettyref{lmm:Mbounds} which shows that 
\begin{equation*} 
M(\calA) \subseteq \calB := \{ \theta: \langle \theta, \theta^* \rangle > (1+\Delta)\rho\|\theta\|\|\theta^*\|, \; \|\theta^*\|(1-\kappa) \leq \|\theta\| \leq \sqrt{\sigma^2 + 3\|\theta^*\|^2} \}.
\end{equation*}
The fact that $ \calB \subset \calA $ allows us to claim that when $ n $ is large enough, $ M_n(\calA) \subset M(\calB) $, and hence $ M_n(\calA) \subseteq M(\calA) \subseteq \calA $.
%This statement is what allows us to say that $ M_n(\calA) \subseteq \calA $; in particular when $ M_n $ is close to $ M $. 
To show this, assume $ \sup_{\theta\in\calA}\|M_n(\theta) - M(\theta) \| < \epsilon $. That implies
\begin{equation}
\sup_{\theta \in \calA}\left\| \frac{M_n(\theta)}{\|M_n(\theta)\|} - \frac{M(\theta)}{\|M(\theta)\|} \right\| \leq 2\sup_{\theta \in \calA}\frac{\|M_n(\theta)-M(\theta)\|}{\|M(\theta)\|} < \frac{2\epsilon}{(1-\kappa)\|\theta^*\|}. \label{eq:close}
\end{equation}
For the last inequality, we used the fact that $ \|M(\theta)\| \geq \|\theta^*\|A \geq \|\theta^*\|(1-\kappa) $ for all $ \theta $ in $ \calA $, which follows from \prettyref{eq:span} and \prettyref{lmm:ABbounds}.
By \prettyref{eq:close} and \prettyref{lmm:Mbounds} \prettyref{eq:cosinea}, we have that
\begin{align*}
\sup_{\theta \in \calA} \left\langle \theta^*, \frac{M_n(\theta)}{\|M_n(\theta)\|} \right\rangle
& \geq \sup_{\theta \in \calA} \left\langle \theta^*, \frac{M(\theta)}{\|M(\theta)\|} \right\rangle - \frac{2\epsilon}{(1-\kappa)} \\
& \geq \|\theta^*\|(1+\Delta)\rho - \frac{2\epsilon}{(1-\kappa)} \\
& \geq \|\theta^*\|\rho,
\end{align*}
provided $ \epsilon < (\frac{1-\kappa}{2})\Delta \rho \|\theta^*\| $ and, by \prettyref{eq:span} and \prettyref{lmm:ABbounds},
\begin{align*}
\sup_{\theta \in \calA} \|M_n(\theta)\|
& \geq \sup_{\theta \in \calA} \| M(\theta) \| - \epsilon \\
& \geq \|\theta^*\|(1-\kappa) - \epsilon \\
& \geq 40\sigma(1-\kappa) - \epsilon \\
& \geq 20\sigma,
\end{align*}
provided $ \epsilon < 20\sigma(1-2\kappa) $, which is positive since $ \kappa < 1/2 $.

For $ \delta \in (0, 1) $, let $ \epsilon_M(n,\delta) $ be the smallest number such that for any fixed $ \theta $ in $ \calA $, we have
\begin{equation*}
\|M_n(\theta) - M(\theta) \| \leq \epsilon_M(n,\delta),
\end{equation*}
with probability at least $ 1-\delta $. Moreover, suppose $ c' = c'(\rho, \sigma, \|\theta^*\|, \|\theta^0\|) $ is a constant so that if $ n \geq c'  $, then
\begin{equation*}
\epsilon_M(n,\delta) \leq \min\left\{ 20\sigma(1-2\kappa), \left(\frac{1-\kappa}{2}\right)\Delta \rho \|\theta^*\| \right\}.
\end{equation*}
This guarantees that $ M_n(\calA) \subseteq \calA $.
For any iteration $ t \in [T] $, we have
\begin{equation*}
\|M_{n/T}(\theta^t) - M(\theta^t) \| \leq \epsilon_M(n/T,\delta/T),
\end{equation*}
with probability at least $ 1-\delta/T $. Thus by a union bound and $ M_n(\calA) \subseteq \calA $,
\begin{equation*}
\max_{t\in  [T] }\|M_{n/T}(\theta^t) - M(\theta^t) \| \leq \epsilon_M(n/T,\delta/T),
\end{equation*}
with probability at least $ 1 - \delta $.

Hence if $ \theta^0 $ belongs to $ \calA $, then by \prettyref{lmm:main},
\begin{align*}
\|\theta^{t} - \theta^* \|
& = \|M_{n/T}(\theta^{t-1}) - \theta^* \| \\
& \leq \|M(\theta^{t-1}) - \theta^* \| + \|M_{n/T}(\theta^t) - M(\theta^t) \| \\
& \leq \gamma \| \theta^{t-1} - \theta^*\| + \max_{t\in [T]}\|M_{n/T}(\theta) - M(\theta) \| \\
& \leq \gamma \| \theta^{t-1} - \theta^*\| + \epsilon_M(n/T,\delta/T),
\end{align*}
with probability at least $ 1 - \delta $.
Solving this recursive inequality yields,
\begin{align*}
\|\theta^{t} - \theta^* \|
& \leq \gamma^t\|\theta^0-\theta^*\| + \epsilon_M(n/T,\delta/T)\sum_{j=0}^{t-1}\gamma^j \\
& \leq \gamma^t\|\theta^0-\theta^*\| + \frac{\epsilon_M(n/T,\delta/T)}{1-\gamma},
\end{align*}
with probability at least $ 1 - \delta $.

Finally, by a slight modification to the proof of \cite[Corollary 5]{Balakrishnan2014} that uses $ M(\theta) \leq \sqrt{\sigma^2+3\|\theta^*\|^2} $ from \prettyref{eq:cosinea1}, it follows that if $ n \geq cd\log(T/\delta) $, then there exists a universal constant $ C > 0 $ such that
\begin{equation*}
\epsilon_M(n/T,\delta/T) \leq C\sqrt{\sigma^2+\|\theta^*\|^2}\sqrt{\frac{dT\log(T/\delta)}{n}}
\end{equation*}
with probability at least $ 1-\delta/T $. This completes the proof of \prettyref{thm:main_splitting_empirical}.

\section{Discussion} \label{sec:discussion}
In this paper, we showed that the empirical EM iterates converge to true coefficients of a mixture of two linear regressions as long as the initial guess lies within a cone (see the condition on \prettyref{thm:main_splitting_empirical}: $\langle\theta^0,\theta^\star\rangle>\rho\|\theta^0\|\|\theta^\star\|$).

In \prettyref{fig:SimEM}, we perform a simulation study of $  \theta^{t} \leftarrow M_n(\theta^{t-1}) $ with $ \sigma = 1 $, $ n = 1000 $, $ d = 2 $, and $ \theta^* = (-7/25, 24/25)^{\top} $. All entries of the covariate vector $X$ and the noise $\varepsilon$ are generated {\it i.i.d.} from a standard Gaussian distribution. We consider the error $ \|\theta^t - \theta^* \| $ plotted as a function of $ \cos \alpha = \frac{\langle \theta^0, \theta^* \rangle}{\|\theta^0\| \|\theta^*\|} $ at iterations $ t = 5, 10, 15, 20, 25 $ (darker lines correspond to larger values of $ t $). For each $ t $, we choose a unit vector $ \theta^0 $ so that $ \cos \alpha $ ranges between $ -1 $ and $ + 1 $.
In accordance with the theory we have developed, increasing the iteration size and increasing the cosine angle decreases the overall error. According to \prettyref{lmm:negative}, the algorithm should suffer when $ \cos \alpha $ is small. Indeed, we observe a sharp transition at $ \cos \alpha \approx 0.2 $. The algorithm converges to the other model parameter $ -\theta^* = (7/25, -24/25)^{\top} $ for initial guesses with cosine angle (approximately) smaller than $ 0.2 $. The plot in \prettyref{fig:SimEM2} is a zoomed-in version of \prettyref{fig:SimEM} near this transition point.

One of the shortcomings of the EM algorithm is that it is \emph{model dependent}, that is, the form of the EM operator is derived from the assumption of Gaussian input $ X $, error $ \varepsilon $, and two component assumption. It is natural to ask how changing either distribution and using the original EM operator designed for Gaussian data performs on simulated data. As a simple illustration, the simulation results in \prettyref{fig:SimEMmis} use $ X \sim \Unif([-\sqrt{3}, \sqrt{3}]^d) $ and $ \varepsilon \sim \Unif([-\sigma\sqrt{3}, \sigma\sqrt{3}]^d) $ (\prettyref{fig:uniform}) and $ X \sim N(0, I_d) $ and $ \varepsilon \sim \Laplace(0, \sigma/\sqrt{2}) $ (\prettyref{fig:laplace}) for $ \sigma^2 = 1 $. The performance is similar to \prettyref{fig:SimEM} and \prettyref{fig:SimEM2}, although note that in \prettyref{fig:uniform}, a larger cosine angle is required for convergence (i.e., cosine angles at least $ \cos \alpha \approx 0.4 $).

More generally, future work would rigorously study the effect of EM under model misspecification. In this direction, the recent work of \cite{dwivedi2018singularity} has analyzed the EM algorithm for over-fitted mixtures.
\begin{figure}[H]
\centering
\includegraphics[width = 0.5\textwidth]{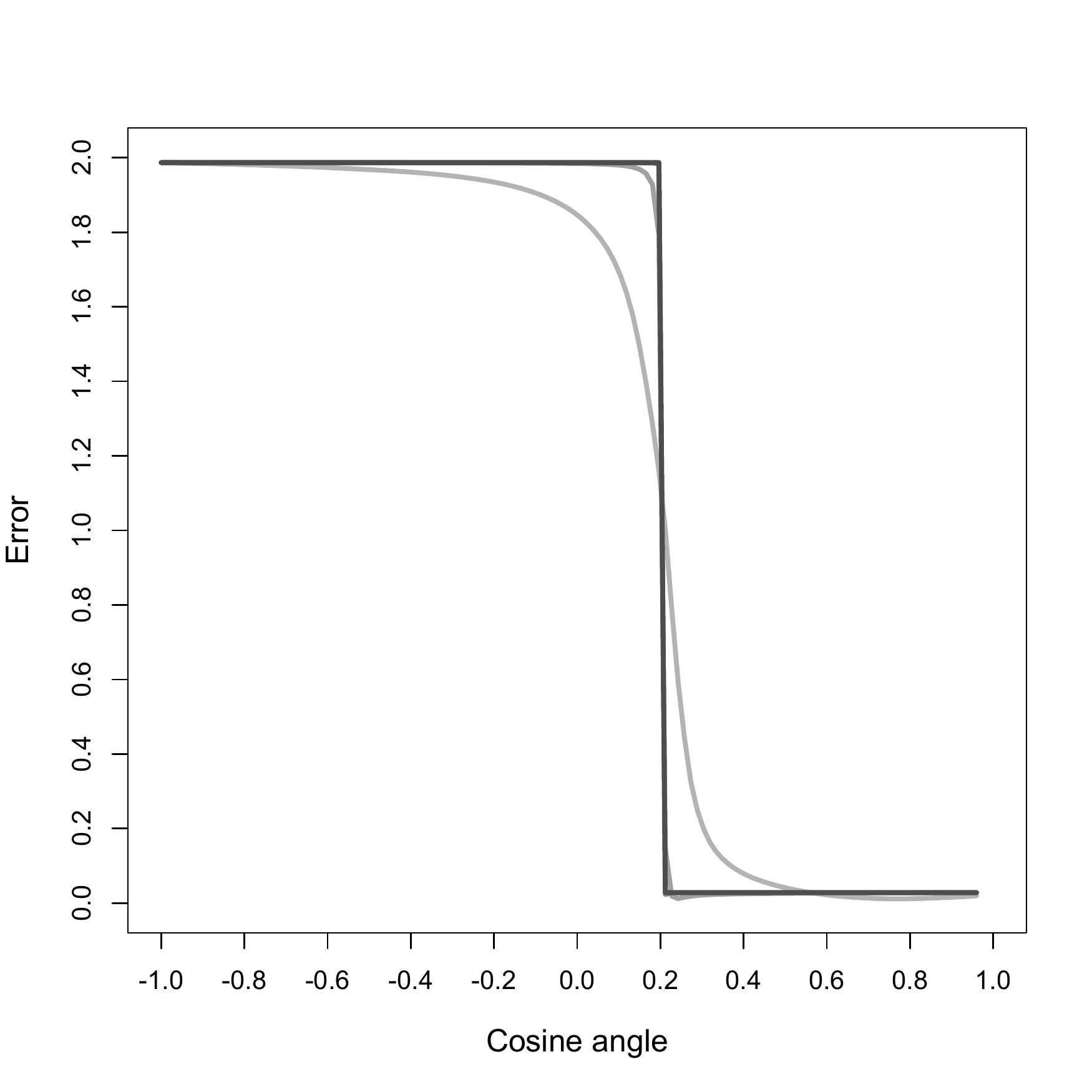}
\caption{A simulation study of $  \theta^{t} \leftarrow M_n(\theta^{t-1}) $ with $ \sigma^2 = 1 $, $ n = 1000 $, $ d = 2 $, and $ \theta^* = (-7/25, 24/25)^{\top} $. The values of $ t $ range from $ 5 $ to $ 25 $. The vertical axis is the error $ \|\theta^t - \theta^* \| $ and the horizontal axis is the cosine angle between the initial guess $ \theta^0 $ and $ \theta^* $. Darker lines correspond to larger values of $ t $.}
\label{fig:SimEM}
\end{figure}
\begin{figure}[H]
\centering
\includegraphics[width = 0.5\textwidth]{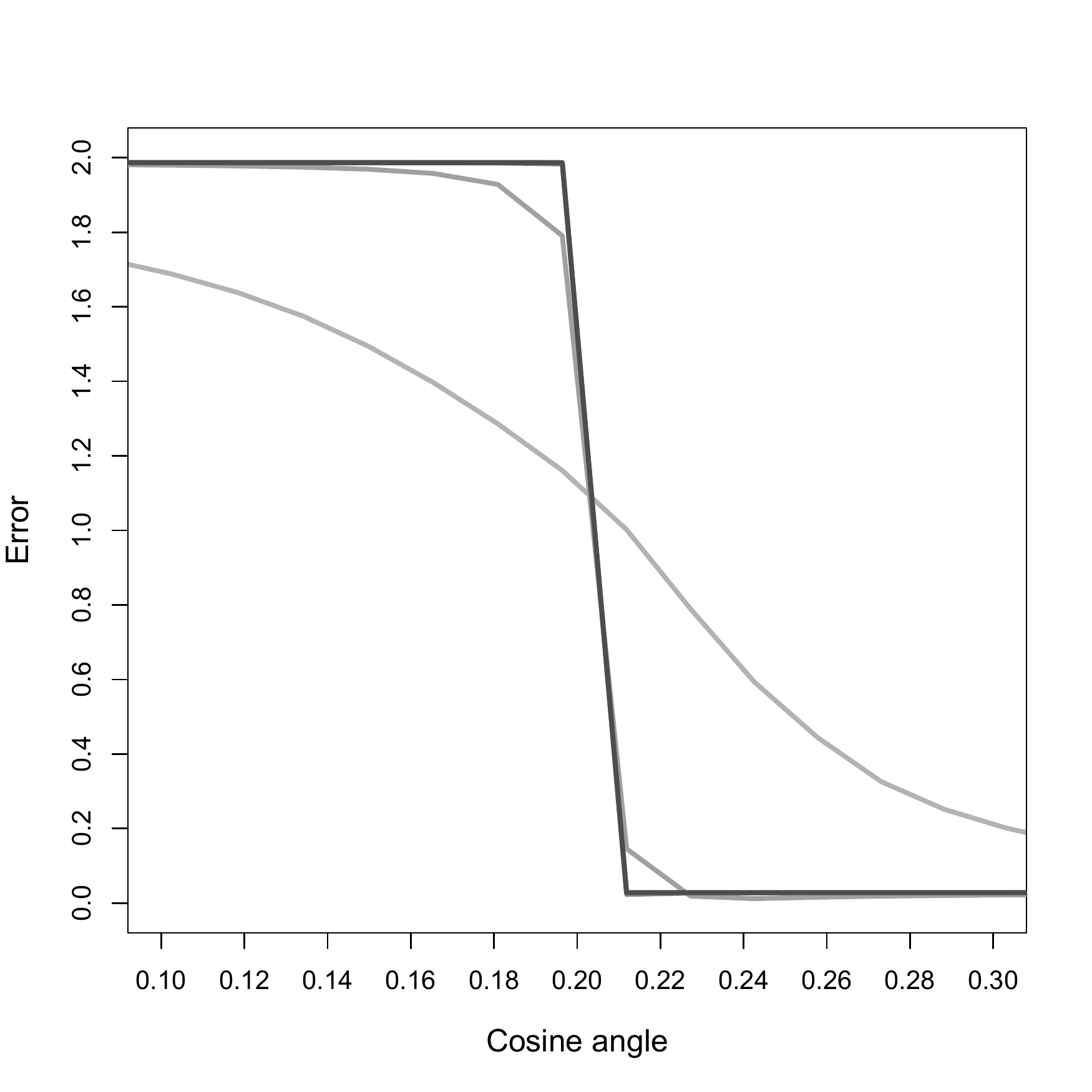}
\caption{A zoomed-in version of \prettyref{fig:SimEM} showing the transition point at $ \cos \alpha \approx 0.2 $.}
\label{fig:SimEM2}
\end{figure}

\begin{figure} [H]
\centering
\begin{minipage}[t]{0.4\textwidth}
  \centering
  \includegraphics[width=1\linewidth]{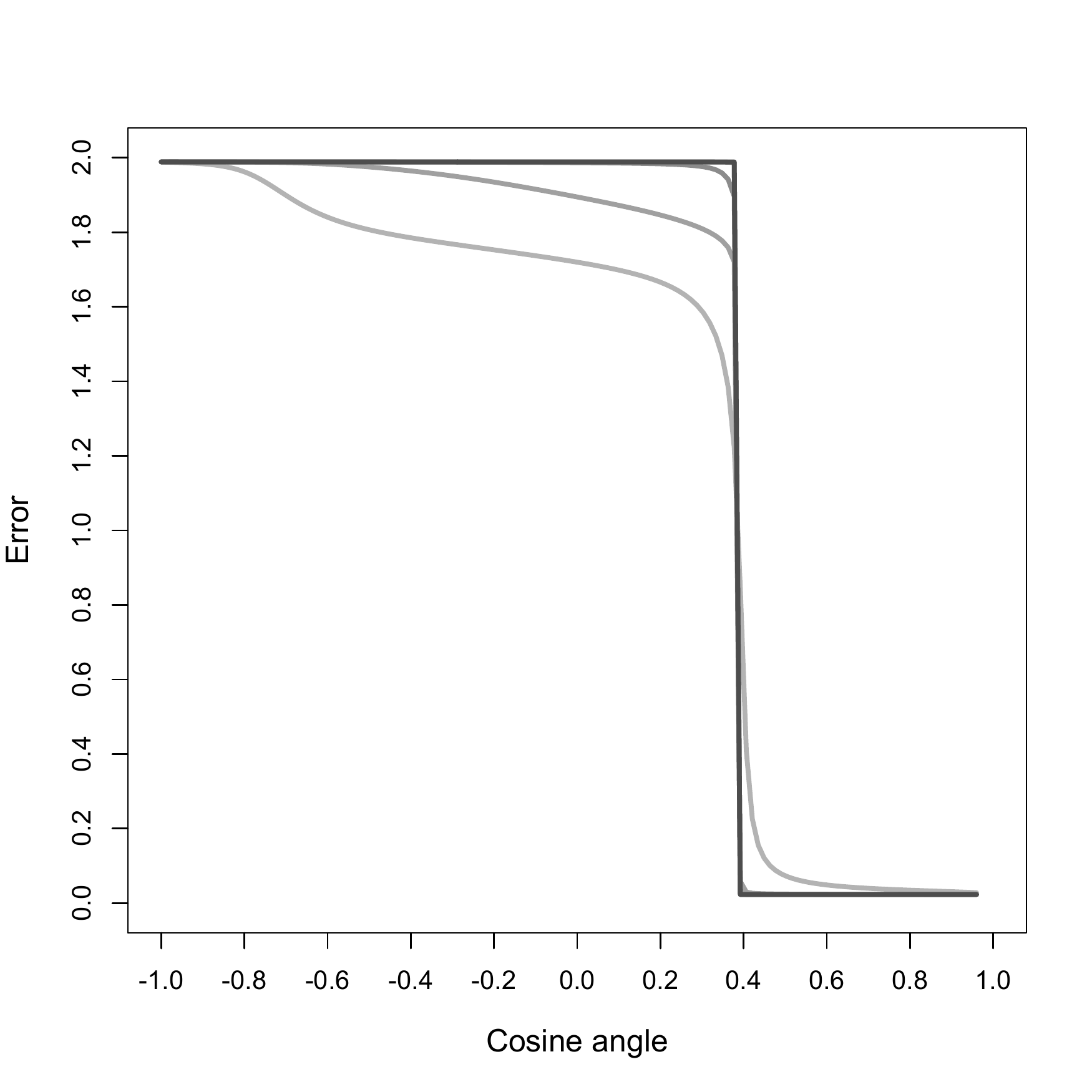}
\captionof{figure}{$ X \sim \Unif([-\sqrt{3}, \sqrt{3}]^d) $ and $ \varepsilon \sim \Unif([-\sqrt{3}, \sqrt{3}]^d) $}
  \label{fig:uniform}
\end{minipage}%
\qquad
\begin{minipage}[t]{0.4\textwidth}
  \centering
  \includegraphics[width=1\linewidth]{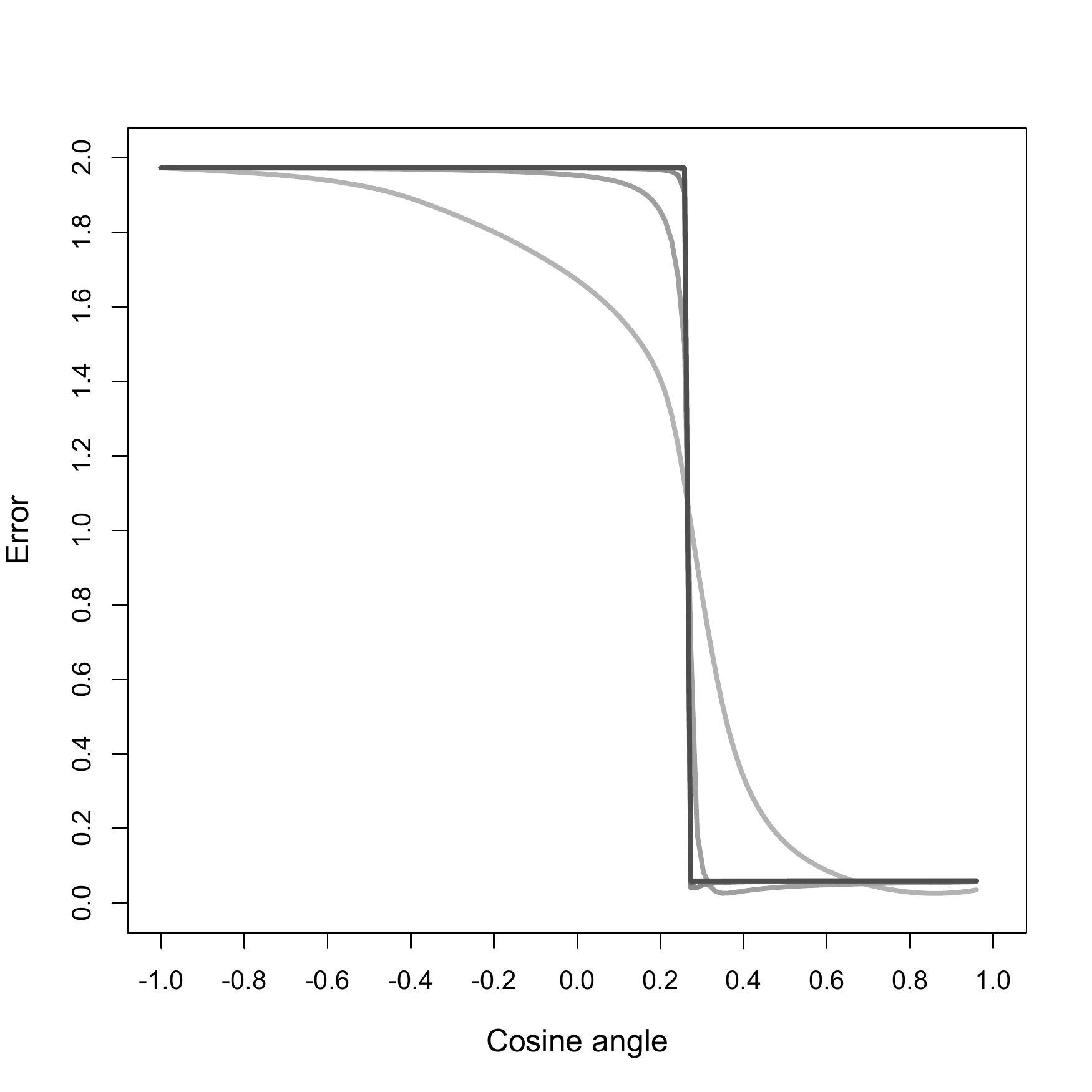}
\captionof{figure}{$ X \sim N(0, I_d) $ and $ \varepsilon \sim \Laplace(0, \sigma/\sqrt{2}) $}
  \label{fig:laplace}
\end{minipage}
\caption{A simulation study of $  \theta^{t} \leftarrow M_n(\theta^{t-1}) $ under model misspecification with $ \sigma^2 = 1 $, $ n = 1000 $, $ d = 2 $, $ \theta^* = (-7/25, 24/25)^{\top} $. The values of $ t $ range from $ 5 $ to $ 25 $. The vertical axis is the error $ \|\theta^t - \theta^* \| $ and the horizontal axis is the cosine angle between the initial guess $ \theta^0 $ and $ \theta^* $. Darker lines correspond to larger values of $ t $.}
\label{fig:SimEMmis}
\end{figure} 

\appendix

\section{Appendix} \label{app:appendix}
In this appendix, we prove \prettyref{lmm:negative} and all other supporting lemmas used in the body of the paper.

\begin{proof}[Proof of \prettyref{lmm:negative}]
Recall that in general, $ M(\theta) = \theta^* A + \theta B $, where
\begin{equation*}
A = \expect{2\phi(W \langle \theta, X \rangle/\sigma^2) + 2(W\langle \theta, X \rangle/\sigma^2) \phi^{\prime}(W \langle \theta, X \rangle/\sigma^2)-1},
\end{equation*}
\begin{equation*}
B = 2\expect{(W^2/\sigma^2)\phi^{\prime}(W \langle \theta, X \rangle/\sigma^2)}.
\end{equation*}

Suppose $ \langle \theta, \theta^* \rangle = 0 $. This implies that $ A = 0 $. To see this, note that
\begin{equation} \label{eq:half}
\expect{\phi(W\langle \theta, X \rangle/\sigma^2)} = \expect{\phi(\Lambda Z_1|Z_2|)} = \phi(0) = 1/2,
\end{equation}
and
\begin{equation} \label{eq:zero}
\expect{W\langle \theta, X \rangle\phi^{\prime}(W\langle \theta, X \rangle/\sigma^2)} = \sigma^2\expect{\Lambda Z_1|Z_2|\phi^{\prime}(\Lambda Z_1|Z_2|)} = 0.
\end{equation}
The first equality \prettyref{eq:half} follows from the the fact that if $ Z \sim N(0, 1) $, then $ \expect{\phi(z Z)} = 1/2 $ for all $ z $ in $ \mathbb{R} $. This fact is easily established by noting that the derivative with respect to $ z $ is zero everywhere. The expectation in \prettyref{eq:zero} vanishes since we are averaging an odd function with respect to a symmetric distribution.
Next, observe that $ B = 2(1+\|\theta^*\|^2/\sigma^2)\expect{Z^2_2\phi^{\prime}(Z_1|Z_2|(\|\theta\|/\sigma^2)\sqrt{\sigma^2+\|\theta^*\|^2})} \rightarrow 1+\|\theta^*\|^2/\sigma^2 > 1 $ as $ \theta \rightarrow 0 $. By continuity, there exists $ r > 0 $ such that if $ \|\theta\| = r $, then $ B > 1 $, and hence
\begin{align*}
\|M(\theta) - \theta^*\|^2 & = \|\theta-\theta^*\|^2 + (B^2-1)\|\theta\|^2 \\ 
& > \|\theta-\theta^*\|^2.
\end{align*}
This shows that
\begin{equation*}
\liminf_{\langle \theta, \theta^* \rangle \downarrow 0,\; \|\theta\| = r}[\|M(\theta) - \theta^*\|^2-\|\theta-\theta^*\|^2] > 0.
\end{equation*}
By continuity, it follows that there exists $ r' > 0 $ such that if $ 0 < \langle \theta, \theta^* \rangle < r' $ then $ \|M(\theta) - \theta^*\|^2 >  \|\theta-\theta^*\|^2 $. It is easy to see that the set of all points satisfying $ 0 < \langle \theta, \theta^* \rangle < r' $ and $ 0 < \|\theta\| < r $ has positive Lebesgue measure and satisfies the stated conditions in the lemma.
\end{proof}

For the following lemmas, recall the definitions
\begin{equation*}
A = \expect{2\phi(W \langle \theta, X \rangle/\sigma^2) + 2(W\langle \theta, X \rangle/\sigma^2) \phi^{\prime}(W \langle \theta, X \rangle/\sigma^2)-1},
\end{equation*}
\begin{equation*}
B = 2\expect{(W^2/\sigma^2)\phi^{\prime}(W \langle \theta, X \rangle/\sigma^2)},
\end{equation*}
and
\begin{equation*}
\kappa^2 = \frac{1}{\frac{\Gamma}{\Lambda}\min\left\{\Lambda,\frac{\Gamma}{\Lambda}\right\}+1} = \max\left\{1-\frac{|\langle\theta_0,\theta^\star\rangle|^2}{\sigma^2+\|\theta^*\|^2}, 1-\frac{\langle\theta,\theta^*\rangle}{\sigma^2+\langle\theta,\theta^*\rangle} \right\}.
\end{equation*}

\begin{lemma} \label{lmm:Mbounds}
The cosine angle between $ \theta^* $ and $ M(\theta) $ is equal to
\begin{equation} \label{eq:cosinea}
\frac{\|\theta^*\|^2A + \langle \theta, \theta^* \rangle B}{\sqrt{(\|\theta^*\|^2A + \langle \theta, \theta^* \rangle B)^2 + B^2(\|\theta\|^2\|\theta^*\|^2 - |\langle \theta, \theta^* \rangle|^2)}}.
\end{equation}
If $ \langle \theta, \theta^* \rangle \geq \rho\|\theta\|\|\theta^*\| $, then there exists positive $ \Delta = \Delta(\rho, \sigma, \|\theta^*\|, \|\theta\|) $ such that the cosine angle \prettyref{eq:cosinea} is at least $ (1+\Delta)\rho $.
Moreover, if $ \langle \theta^*, \theta \rangle \geq 0 $, then
\begin{equation} \label{eq:cosinea1}
\|\theta^*\|^2(1-\kappa)^2 \leq \|M(\theta)\|^2 = \|\theta^*\|^2A^2 + \|\theta\|^2B^2+2\langle \theta, \theta^* \rangle AB \leq \sigma^2 + 3\|\theta^*\|^2,
\end{equation}
and
\begin{equation} \label{eq:cosinea2}
\langle \theta^*, M(\theta) \rangle = \|\theta^*\|^2A + \langle \theta, \theta^* \rangle B \geq \|\theta^*\|^2(1-\kappa).
\end{equation}
\end{lemma}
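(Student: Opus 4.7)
The plan is to exploit the representation $M(\theta)=A\theta^*+B\theta$ from \prettyref{eq:span}, together with the bounds $A\in[1-\kappa,1+\sqrt{\kappa}]$ (from \prettyref{lmm:ABbounds}) and $B\geq 0$ (immediate since $\phi'>0$). All of the equalities in the statement reduce to algebra on this decomposition. Expanding,
\begin{equation*}
\langle\theta^*,M(\theta)\rangle = A\|\theta^*\|^2 + B\langle\theta,\theta^*\rangle, \qquad \|M(\theta)\|^2 = A^2\|\theta^*\|^2 + 2AB\langle\theta,\theta^*\rangle + B^2\|\theta\|^2,
\end{equation*}
yields the equalities in \prettyref{eq:cosinea1} and \prettyref{eq:cosinea2}. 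The cosine angle formula \prettyref{eq:cosinea} then follows by combining these with the Lagrange-type identity
\begin{equation*}
\|\theta^*\|^2\|M(\theta)\|^2 - \langle\theta^*,M(\theta)\rangle^2 = B^2\bigl(\|\theta\|^2\|\theta^*\|^2-\langle\theta,\theta^*\rangle^2\bigr),
\end{equation*}
which rewrites the denominator $\|M(\theta)\|\,\|\theta^*\|$ of the cosine in the stated form.

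For the magnitude bounds under $\langle\theta,\theta^*\rangle\geq 0$, the cross term $2AB\langle\theta,\theta^*\rangle$ is nonnegative, giving $\|M(\theta)\|^2\geq A^2\|\theta^*\|^2\geq(1-\kappa)^2\|\theta^*\|^2$ and $\langle\theta^*,M(\theta)\rangle\geq A\|\theta^*\|^2\geq(1-\kappa)\|\theta^*\|^2$. The upper bound $\|M(\theta)\|^2\leq \sigma^2+3\|\theta^*\|^2$ I would derive directly from \prettyref{eq:population-em} rather than from $A,B$: for any unit vector $u$, since $|2\phi-1|\leq 1$, Cauchy--Schwarz yields
\begin{equation*}
\langle u, M(\theta)\rangle^2 \leq \bigl(\expect{|\langle u,X\rangle|\,|Y|}\bigr)^2 \leq \expect{\langle u,X\rangle^2 Y^2}.
\end{equation*}
Expanding $Y^2 = \langle\theta^*,X\rangle^2+2R\varepsilon\langle\theta^*,X\rangle+\varepsilon^2$, using $\expect{R\varepsilon}=0$ and the Gaussian moment identity $\expect{\langle u,X\rangle^2\langle\theta^*,X\rangle^2}=\|u\|^2\|\theta^*\|^2+2\langle u,\theta^*\rangle^2\leq 3\|\theta^*\|^2$, one gets $\langle u,M(\theta)\rangle^2\leq \sigma^2+3\|\theta^*\|^2$; choosing $u=M(\theta)/\|M(\theta)\|$ concludes.

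The main obstacle is exhibiting an explicit positive $\Delta=\Delta(\rho,\sigma,\|\theta^*\|,\|\theta\|)$. Write $u:=A\|\theta^*\|>0$, $v:=B\|\theta\|\geq 0$, and $c:=\langle\theta,\theta^*\rangle/(\|\theta\|\|\theta^*\|)\in[\rho,1]$; then the cosine angle \prettyref{eq:cosinea} equals $f(c):=(u+cv)/\sqrt{u^2+2cuv+v^2}$. A direct differentiation gives $f'(c)=v^2(cu+v)(u^2+2cuv+v^2)^{-3/2}>0$, so $f$ is strictly increasing on $[0,1]$ and it suffices to bound $f(\rho)$ from below. Squaring and simplifying,
\begin{equation*}
f(\rho)^2 - \rho^2 = \frac{(1-\rho^2)\,u(u+2\rho v)}{u^2+2\rho uv+v^2} > 0,
\end{equation*}
and since $f(\rho)\leq 1$ the factorization $f(\rho)-\rho = (f(\rho)^2-\rho^2)/(f(\rho)+\rho)$ produces an explicit positive gap, hence $f(c)\geq f(\rho)\geq (1+\Delta)\rho$ for some $\Delta>0$ expressible in $u,v,\rho$. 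To convert this into the stated functional dependence, it remains to bound $u,v$ in terms of $\rho,\sigma,\|\theta^*\|,\|\theta\|$: $u\geq(1-\kappa)\|\theta^*\|$ from \prettyref{lmm:ABbounds}, and $v = \|B\theta\| = \|M(\theta)-A\theta^*\|\leq\sqrt{\sigma^2+3\|\theta^*\|^2}+(1+\sqrt{\kappa})\|\theta^*\|$ via the just-proved upper bound, with $\kappa$ itself a function of these parameters. Substituting yields a closed-form (if unwieldy) positive $\Delta$ with the advertised dependence, completing the proof.
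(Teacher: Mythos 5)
Your proposal is correct, and it rests on the same core facts as the paper's proof: the decomposition $M(\theta)=A\theta^*+B\theta$, the bounds $1-\kappa\le A\le 1+\sqrt{\kappa}$, and $B\ge 0$. The differences are at the level of execution. For the upper bound $\|M(\theta)\|^2\le\sigma^2+3\|\theta^*\|^2$, the paper splits $M(\theta)$ into its $\theta_0$ and $\theta_0^{\perp}$ components and bounds each (using $\langle\theta,M(\theta)\rangle\le\sqrt{\Expect[|W\langle\theta,X\rangle|^2]}$ and $A\le\sqrt{2}$), whereas you apply Cauchy--Schwarz to a single arbitrary unit vector $u$ and compute the Gaussian fourth moment directly; both are one-line Cauchy--Schwarz arguments and yield the same constant. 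For the $(1+\Delta)\rho$ bound, the paper normalizes by $B$, works with $\tau=\frac{\|\theta^*\|A}{\|\theta\|B}$, and lower-bounds $\tau$ via the estimate $B\le 2(1+\|\theta^*\|^2/\sigma^2)\kappa^3$ from \prettyref{lmm:ABbounds}; you instead work with $u=A\|\theta^*\|$, $v=B\|\theta\|$ and control $v$ through the just-proved bound on $\|M(\theta)\|$, which avoids invoking the upper bound on $B$ at all. Your explicit verification that $f(c)$ is increasing in $c$ also makes rigorous a step the paper performs silently when it replaces $\langle\theta,\theta^*\rangle$ by $\rho\|\theta\|\|\theta^*\|$ inside the cosine expression. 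One cosmetic point: both your $\Delta$ and the paper's still involve $\kappa$, which depends on $\theta$; to get the advertised dependence on $(\rho,\sigma,\|\theta^*\|,\|\theta\|)$ alone one should note, as you do, that on the cone $\langle\theta,\theta^*\rangle\ge\rho\|\theta\|\|\theta^*\|$ the quantity $\kappa$ is bounded above by a function of those parameters only.
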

\begin{proof}
The stated expression \prettyref{eq:cosinea} for the cosine angle between $ \theta^* $ and $ M(\theta) $ comes from the expression $ \frac{\langle u, v\rangle}{\|u\|\|v\|} = \frac{\langle \theta^*, M(\theta)\rangle}{\|\theta^*\|\|M(\theta)\|} $ for the cosine angle between two vectors $ u $ and $ v $, and the fact that $ M(\theta) = A\theta^* + B\theta $  (see \prettyref{eq:span}).

Next, we prove the second statement about the lower bound on \prettyref{eq:cosinea}. Let $ \tau = \frac{\|\theta^*\|}{\|\theta\|}\frac{A}{B} $. Observe that
\begin{align}
& \frac{\|\theta^*\|^2A + \langle \theta, \theta^* \rangle B}{\sqrt{(\|\theta^*\|^2A + \langle \theta, \theta^* \rangle B)^2 + B^2(\|\theta\|^2\|\theta^*\|^2 - |\langle \theta, \theta^* \rangle|^2) }} \nonumber \\
& = \frac{1}{\sqrt{1 + \frac{\|\theta\|^2\|\theta^*\|^2 - |\langle \theta, \theta^* \rangle|^2}{(\|\theta^*\|^2\frac{A}{B} + \langle \theta, \theta^* \rangle)^2 } }} \nonumber \\
& \geq \frac{1}{\sqrt{1 + \frac{1-\rho^2}{(\tau+\rho)^2}}} \nonumber \\
& = \frac{\rho}{\sqrt{1 - (1-\rho^2)\frac{\tau(\tau+2\rho)}{(\tau+\rho)^2}}} \nonumber \\
& \geq \frac{\rho}{\sqrt{1 - (1-\rho^2)\frac{\tau}{\tau+\rho}}} \nonumber \\
& \geq \rho\left(1+\frac{1}{2}(1-\rho^2)\frac{\tau}{\tau+\rho}\right), \label{eq:finalcosinea}
\end{align}
where the last line \prettyref{eq:finalcosinea} follows from the inequality $ 1/\sqrt{1-z} \geq 1+z/2 $ for all $ z \in (0, 1) $. Next, note that from \prettyref{lmm:ABbounds},
\begin{equation*}
\frac{A}{B} \geq \frac{\sigma^2(1-\kappa)}{2(\sigma^2+\|\theta^*\|^2)\kappa^3}.
\end{equation*}
Thus, $ \tau \geq \tau' := \frac{\sigma^2\|\theta^*\|(1-\kappa)}{2\|\theta\|(\sigma^2+\|\theta^*\|^2)\kappa^3} $
and so we can set
\begin{equation*}
\Delta = \frac{1}{2}(1-\rho^2)\left(\frac{\tau'}{\tau'+\rho}\right) > 0.
\end{equation*}

For the statement in \prettyref{eq:cosinea1}, the identity 
\begin{equation*}
\|M(\theta)\|^2 = \|\theta^*\|^2A^2 + \|\theta\|^2B^2+2\langle \theta, \theta^* \rangle AB
\end{equation*}
is an immediate consequence of $ M(\theta) = A\theta^* + B\theta $. By \prettyref{lmm:ABbounds}, $ A \geq 1-\kappa $ and hence since $ \langle \theta, \theta^* \rangle \geq 0 $, we have $ \|M(\theta)\|^2 \geq \|\theta^*\|^2A^2 \geq \|\theta^*\|^2(1-\kappa)^2 $.

Next, we will show that $ \|M(\theta)\|^2 \leq \sigma^2+3\|\theta^*\|^2 $ for all $ \theta $ in $ \mathbb{R}^d $. To see this, note that by Jensen's inequality,
\begin{align*}
\langle \theta, M(\theta) \rangle & = \expect{(2\phi(W\langle \theta, X \rangle/\sigma^2)-1)W\langle \theta, X \rangle} \\ 
& \leq \expect{|W\langle \theta, X \rangle|} \\
& \leq \sqrt{\expect{|W\langle \theta, X \rangle|^2}} \\
& = \sigma^2\sqrt{\Lambda^2 + 3\Gamma^2} \\
& = \|\theta\|\sqrt{\sigma^2+\|\theta^*\|^2 + 2|\langle \theta_0, \theta^* \rangle|^2 }.
\end{align*}
Next, it can be shown that $ |2\phi(z)+2z\phi^{\prime}(z)-1| \leq \sqrt{2} $ and hence $ A \leq \sqrt{2} $. Using this, we have
\begin{align*}
\langle \theta^{\perp}_0, M(\theta) \rangle & =  A\langle \theta^{\perp}_0, \theta^* \rangle \\
& \leq \sqrt{2}\langle \theta^{\perp}_0, \theta^* \rangle.
\end{align*}
Putting these two facts together, we have
\begin{align*} \label{eq:Mbound}
\|M(\theta)\|^2 & = |\langle \theta^{\perp}_0, M(\theta) \rangle|^2 + |\langle \theta_0, M(\theta) \rangle|^2 \\
& \leq \sigma^2 + \|\theta^*\|^2 + 2|\langle \theta^{\perp}_0, \theta^* \rangle|^2 + 2|\langle \theta_0, \theta^* \rangle|^2 \\
& = \sigma^2 + 3\|\theta^*\|^2.
\end{align*}

The final statement \prettyref{eq:cosinea2} follows from similar arguments and so we omit them here.
\end{proof}

\begin{lemma} \label{lmm:positive}
If $ \langle \theta, \theta^* \rangle \geq 0 $, then
\begin{equation*}
\expect{W\langle \theta, X \rangle \phi^{\prime}(W \langle \theta, X \rangle/\sigma^2)} \geq 0.
\end{equation*}
\end{lemma}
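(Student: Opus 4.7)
The plan is to use the two-dimensional reduction already set up in the proof of \prettyref{lmm:main}, namely the distributional identity
\begin{equation*}
W\langle \theta, X \rangle/\sigma^2 \stackrel{\mathcal{D}}{=} \Lambda Z_1|Z_2| + \Gamma Z_2^2,
\end{equation*}
with $Z_1, Z_2 \iiddistr N(0,1)$, $\Gamma = \langle \theta, \theta^*\rangle/\sigma^2 \geq 0$ (by hypothesis), and $\Lambda \geq 0$. Factoring out the $\sigma^2$, this reduces the claim to showing that
\begin{equation*}
\Expect\bigl[(\Lambda Z_1|Z_2| + \Gamma Z_2^2)\,\phi'(\Lambda Z_1|Z_2| + \Gamma Z_2^2)\bigr]\;\geq\;0.
\end{equation*}
I would then condition on $Z_2$: given $Z_2 = z_2$, the quantity $S := \Lambda Z_1|z_2|$ is a centered Gaussian (with variance $\Lambda^2 z_2^2$) and $\mu := \Gamma z_2^2 \geq 0$ is deterministic. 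By the tower property it suffices to prove that for every centered Gaussian $S$ and every $\mu \geq 0$,
\begin{equation*}
\Expect\bigl[(S+\mu)\phi'(S+\mu)\bigr] \geq 0.
\end{equation*}

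The key structural observation is that $\phi'(z) = 2e^{-2z}/(1+e^{-2z})^2$ is an even function, so $g(y) := y\phi'(y)$ is odd and strictly positive on $(0,\infty)$. Writing $\varphi$ for the density of $S$ and substituting $u = y+\mu$, I would fold the integral over $\{u < 0\}$ onto $\{u > 0\}$ using the oddness of $g$ and evenness of $\varphi$ to obtain
\begin{equation*}
\Expect[g(S+\mu)] = \int_0^\infty g(u)\bigl[\varphi(u-\mu) - \varphi(u+\mu)\bigr]\,du.
\end{equation*}
For $u>0$ and $\mu \geq 0$ we have $|u-\mu| \leq u+\mu$, and since a centered Gaussian density is decreasing in $|x|$, the bracket is nonnegative. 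Combined with $g(u) \geq 0$ on $(0,\infty)$, the integrand is pointwise nonnegative and the claim follows.

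There is really no hard step here; the only thing to be careful about is ensuring the correct sign bookkeeping in the folding step (which relies precisely on $\mu \geq 0$, i.e.\ on the hypothesis $\langle \theta, \theta^*\rangle \geq 0$), and the degenerate cases $z_2 = 0$ or $\Lambda = 0$, which make $S$ a point mass at $0$ and render the conditional expectation trivially zero.
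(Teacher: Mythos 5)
Your proof is correct and follows the same reduction as the paper: both pass through the distributional identity \prettyref{eq:dist2}, factor out $\sigma^2$, and condition on $Z_2$ to reduce the claim to showing $\Expect\left[(\alpha Z+\beta)\phi'(\alpha Z+\beta)\right]\geq 0$ for $Z\sim N(0,1)$ and $\alpha,\beta\geq 0$. The only difference is how that one-dimensional inequality is handled: the paper simply cites it (Lemma 5 of \cite{Klusowski2016} or Lemma 1 of \cite{daskalakis2016ten}), whereas you prove it directly by noting that $\phi'$ is even, so $g(y)=y\phi'(y)$ is odd and nonnegative on $(0,\infty)$, and folding the integral to obtain $\int_0^\infty g(u)\left[\varphi(u-\mu)-\varphi(u+\mu)\right]\diff u\geq 0$, which is pointwise nonnegative since the centered Gaussian density decreases in $|x|$. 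This symmetrization is clean and makes the lemma self-contained, a modest gain over the citation. One trivial quibble: in the degenerate case $\Lambda=0$ with $z_2\neq 0$ the conditional expectation is $\mu\phi'(\mu)\geq 0$ rather than exactly zero, but this does not affect the conclusion.
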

\begin{proof}
Writing $ W\langle \theta, X \rangle $ according to the distributional equivalent \prettyref{eq:dist2}, note that the statement is true if
\begin{equation*}
\expect{(\alpha Z + \beta)\phi^{\prime}(\alpha Z + \beta)} \geq 0,
\end{equation*}
where $ Z \sim N(0, 1) $ and $ \alpha \geq 0 $ and $ \beta \geq 0 $.
This fact is proved in \cite[Lemma 5]{Klusowski2016} or \cite[Lemma 1]{daskalakis2016ten}.
\end{proof}

\begin{lemma} \label{lmm:inequality}
The following inequalities hold for all $ z \in \mathbb{R} $:
\begin{equation*}
|2\phi(z)+2z\phi^{\prime}(z)-1| \leq 1+\sqrt{2(1-\phi(z))},
\end{equation*}
and
\begin{equation*}
z^2\phi^{\prime}(z) \leq \sqrt{2(1-\phi(z))}.
\end{equation*}
\end{lemma}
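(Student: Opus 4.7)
My plan is to handle both inequalities by first reducing to $z\geq 0$ via the parity relations $\phi(-z)=1-\phi(z)$ and $\phi'(-z)=\phi'(z)$, and then verifying each by elementary calculus on exponentials. Throughout I will use the identities $2\phi(z)-1=\tanh z$, $\phi'(z)=2\phi(z)(1-\phi(z))$, and $1-\phi(z)=1/(1+e^{2z})$.

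For the first inequality, I would observe that $2\phi(z)-1=\tanh z$ and $2z\phi'(z)$ share the same sign as $z$, so
\[
|2\phi(z)+2z\phi'(z)-1| \;=\; |\tanh z|+2|z|\phi'(z) \;\leq\; 1+2|z|\phi'(z),
\]
and it then suffices to prove $2|z|\phi'(z)\leq \sqrt{2(1-\phi(z))}$. For $z\geq 0$, using $1+\tanh z\leq 2$ gives $2z\phi'(z)=z(1-\tanh z)(1+\tanh z)\leq 2z(1-\tanh z)=4z/(e^{2z}+1)$, while $\sqrt{2(1-\phi(z))}=\sqrt{2/(e^{2z}+1)}$, so squaring reduces the claim to the scalar inequality $8z^2\leq e^{2z}+1$. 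For $z<0$, the bound follows from the $z\geq 0$ case applied to $w=-z$ together with the observation $\sqrt{2(1-\phi(w))}=\sqrt{2\phi(z)}\leq \sqrt{2(1-\phi(z))}$, where the last step uses $\phi(z)\leq 1-\phi(z)$ for $z\leq 0$.

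For the second inequality, I would bound $\phi'(z)=2\phi(z)(1-\phi(z))\leq 2(1-\phi(z))$, so $z^2\phi'(z)\leq 2z^2(1-\phi(z))$. For $z\geq 0$, squaring the target inequality $2z^2(1-\phi(z))\leq\sqrt{2(1-\phi(z))}$ reduces it to $2z^4\leq e^{2z}+1$, which is immediate from $2z^4\leq e^{2z}$, since the maximum of $z^4 e^{-2z}$ on $[0,\infty)$ is attained at $z=2$ with value $16 e^{-4}<1/2$. For $z<0$, I use $\sqrt{2(1-\phi(z))}\geq 1$ (since $1-\phi(z)\geq 1/2$), while $z^2\phi'(z)=w^2\phi'(w)\leq\sqrt{2(1-\phi(w))}\leq 1$ by the $z\geq 0$ case applied to $w=-z$.

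The main obstacle is the reasonably tight inequality $8z^2\leq e^{2z}+1$ on $[0,\infty)$; naive estimates such as $8z^2\leq e^{2z}$ fail near $z=1$, where $e^2\approx 7.39$. I would establish it by analyzing $f(z)=e^{2z}+1-8z^2$: since $f''(z)=4e^{2z}-16$ has the unique root $z=\log 2$ and is strictly increasing, $f'$ is unimodal with minimum $f'(\log 2)=8-16\log 2<0$, so $f'$ has exactly two zeros $z_1<\log 2<z_2$. Consequently $f$ attains its minimum on $[0,\infty)$ either at $z=0$ (value $2$) or at $z_2$, where the relation $e^{2z_2}=8z_2$ yields $f(z_2)=1-8z_2(z_2-1)$. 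Since $e^{2z}-8z$ is increasing for $z>\log 2$ and strictly positive at $z_*=(1+\sqrt{3/2})/2\approx 1.112$, we have $z_2<z_*$ and hence $z_2(z_2-1)<z_*(z_*-1)=1/8$, yielding $f(z_2)>0$.
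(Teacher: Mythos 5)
Your proof is correct, and it takes a genuinely different route from the paper for the simple reason that the paper gives no analytic argument at all: its entire proof of this lemma is the sentence ``Their validity can easily be established using mathematical software.'' Your write-up replaces that assertion with a self-contained elementary verification. The key moves all check out: the parity relations $\phi(-z)=1-\phi(z)$, $\phi'(-z)=\phi'(z)$ correctly reduce both claims to $z\ge 0$ (with the sign comparisons $\sqrt{2\phi(z)}\le\sqrt{2(1-\phi(z))}$ for $z\le 0$ handled properly); the identity $(1-\tanh z)(1+\tanh z)=2\phi'(z)$ and the bound $\phi'\le 2(1-\phi)$ reduce the two inequalities to the scalar facts $8z^2\le e^{2z}+1$ and $2z^4\le e^{2z}$ on $[0,\infty)$; and your treatment of the first of these --- locating the interior critical point $z_2$ via $e^{2z_2}=8z_2$ and showing $z_2<z_*=(1+\sqrt{3/2})/2$ so that $z_2(z_2-1)<z_*(z_*-1)=1/8$ --- is exactly the kind of care needed, since (as you note) the cruder bound $8z^2\le e^{2z}$ genuinely fails near $z=1$. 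The only cost of your approach is length; what it buys is a rigorous, software-free proof of a lemma the paper leaves unproved, which is a strict improvement.
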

\begin{proof}
Their validity can easily be established using mathematical software.
\end{proof}

\begin{lemma} \label{lmm:main_inequality}
Let $ \alpha, \beta > 0 $ and $ Z \sim N(0,1) $. Then
\begin{equation*}
\expect{2(1-\phi(\alpha(Z + \beta)))} \leq \exp\left\{-\frac{\beta}{2}\min\{\alpha, \beta\}\right\}.
\end{equation*}
Moreover,
\begin{equation*}
\expect{2(1-\phi(\alpha |Z_2|(Z_1 + \beta |Z_2|)))} \leq \frac{1}{\sqrt{\beta\min\{\alpha,\beta\}+1}}.
\end{equation*}
\end{lemma}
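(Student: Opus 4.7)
The plan is to prove the first inequality by combining a pointwise sigmoid bound with a monotonicity argument that handles the hard regime, and then deduce the second inequality by iterated conditioning.

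The starting point is the pointwise bound $2(1-\phi(u)) \leq e^{-u}$ valid for every $u \in \reals$; since $2(1-\phi(u)) = \tfrac{2}{1+e^{2u}}$, this is equivalent to $2e^u \leq 1+e^{2u}$, i.e., $(e^u-1)^2 \geq 0$. Taking expectations and using the standard Gaussian moment generating function gives
\begin{equation*}
\expect{2(1-\phi(\alpha(Z+\beta)))} \leq \expect{e^{-\alpha(Z+\beta)}} = e^{-\alpha\beta+\alpha^2/2}.
\end{equation*}
When $\alpha \leq \beta$ the exponent is bounded by $-\alpha\beta/2 = -(\beta/2)\min\{\alpha,\beta\}$, so the first inequality follows immediately. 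For $\alpha > \beta$ this Chernoff-type estimate is too crude, so instead I will show that $\alpha \mapsto \expect{2(1-\phi(\alpha(Z+\beta)))}$ is non-increasing on $(0,\infty)$ whenever $\beta > 0$. Once this monotonicity is in hand, its value at any $\alpha > \beta$ is dominated by its value at $\alpha = \beta$, which the easy case already bounded by $e^{-\beta^2/2} = e^{-(\beta/2)\min\{\alpha,\beta\}}$.

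Proving the monotonicity is the main technical step. Differentiating under the integral sign and using $2(1-\phi(u)) = 1-\tanh(u)$, the question reduces to verifying $\expect{(Z+\beta)\,\mathrm{sech}^2(\alpha(Z+\beta))} \geq 0$. Writing $V = Z+\beta \sim N(\beta,1)$ and symmetrizing via the change of variables $v \mapsto -v$ on the negative half-line, I obtain
\begin{equation*}
\expect{V\,\mathrm{sech}^2(\alpha V)} = \int_0^{\infty} v\,\mathrm{sech}^2(\alpha v)\bigl[\varphi(v-\beta)-\varphi(v+\beta)\bigr]\,dv,
\end{equation*}
where $\varphi$ denotes the standard Gaussian density. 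The bracketed factor equals $\tfrac{2}{\sqrt{2\pi}}\,e^{-(v^2+\beta^2)/2}\sinh(\beta v)$, which is non-negative for $v,\beta \geq 0$, so the integrand is non-negative and the inequality follows.

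For the second inequality, I condition on $Z_2$ and apply the first inequality with $\alpha' = \alpha|Z_2|$ and $\beta' = \beta|Z_2|$ (both strictly positive almost surely), obtaining
\begin{equation*}
\expect{2(1-\phi(\alpha|Z_2|(Z_1+\beta|Z_2|)))\mid Z_2} \leq \exp\left\{-\frac{\beta Z_2^2\,\min\{\alpha,\beta\}}{2}\right\}.
\end{equation*}
Taking the outer expectation and using the Gaussian identity $\expect{e^{-cZ_2^2/2}} = (1+c)^{-1/2}$ with $c = \beta\min\{\alpha,\beta\} > 0$ yields the stated bound $1/\sqrt{1+\beta\min\{\alpha,\beta\}}$, completing the proof.
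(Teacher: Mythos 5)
Your proof is correct and follows essentially the same route as the paper: the pointwise bound $2(1-\phi(u))\le e^{-u}$ plus the Gaussian MGF for $\alpha\le\beta$, monotonicity in $\alpha$ to reduce the case $\alpha>\beta$ to $\alpha=\beta$, and conditioning on $Z_2$ with the $\chi^2_1$ MGF for the second inequality. The only difference is that you supply a self-contained symmetrization proof of the monotonicity step, which the paper instead imports by citation from earlier work on Gaussian mixtures.
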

\begin{proof}
The second conclusion follows immediately from the first since
\begin{align*}
\expect{2(1-\phi(\alpha |Z_2|(Z_1 + \beta |Z_2|)))}
& = 2\Expect_{Z_2}\left[\Expect_{Z_1}\left[1-\phi(\alpha |Z_2|(Z_1 + \beta |Z_2|))\right]\right] \\
& \leq \Expect_{Z_2}\left[\exp\left\{-\frac{Z^2_2}{2}\beta\min\{\alpha,\beta\}\right\}\right] \\
& = \frac{1}{\sqrt{\beta\min\{\alpha,\beta\}+1}}.
\end{align*}
The last equality follows from the moment generating function of $\chi^2_1$.

For the first conclusion, we first observe that the mapping $ \alpha \mapsto \expect{\phi(\alpha(Z + \beta))} $ is increasing (see \cite[Lemma 5]{Klusowski2016} or \cite[Lemma 1]{daskalakis2016ten}). Next, note the inequality
\begin{equation*}
2(1-\phi(z)) \leq e^{-z},
\end{equation*}
which is equivalent to $ (e^z-1)^2 \geq 0 $.
If $ \alpha \geq \beta $, then
\begin{align*}
\expect{2(1-\phi(\alpha(Z + \beta)))}
& \leq \expect{2(1-\phi(\beta(Z + \beta)))} \\
& \leq \expect{e^{-(\beta(Z + \beta))}} \\
& = e^{-\beta^2/2}.
\end{align*}
If $ \alpha \leq \beta $, then
\begin{align*}
\expect{2(1-\phi(\alpha(Z + \beta)))}
& \leq \expect{e^{-(\alpha(Z + \beta))}} \\
& = e^{\alpha^2/2-\alpha\beta} \\
& \leq e^{-\alpha\beta/2}.
\end{align*}
In each case, we used the moment generating function of a Gaussian distribution to evaluate the expectations.
\end{proof}

\begin{lemma} \label{lmm:ABbounds}
The following inequalities hold:
\begin{equation*}
1-\kappa \leq A \leq 1+\sqrt{\kappa},
\end{equation*}
and
\begin{equation*}
B \leq 2(1+\|\theta^*\|^2/\sigma^2)\kappa^3.
\end{equation*}
\end{lemma}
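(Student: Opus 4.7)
The strategy is to reduce both $A$ and $B$ to one- or two-dimensional Gaussian integrals via the representation developed in \prettyref{sec:proof}. Using \prettyref{eq:dist1}, we have $W^2/\sigma^2 = (1+\|\theta^*\|^2/\sigma^2)Z_2^2$ and $W\langle\theta,X\rangle/\sigma^2 \stackrel{\mathcal{D}}{=} z := \Lambda Z_1|Z_2| + \Gamma Z_2^2$, so
\begin{equation*}
A = \Expect\left[2\phi(z) + 2z\phi'(z) - 1\right], \qquad B = 2\left(1+\|\theta^*\|^2/\sigma^2\right)\Expect\left[Z_2^2\phi'(z)\right].
\end{equation*}
Setting $c := (\Gamma/\Lambda)\min\{\Lambda,\Gamma/\Lambda\}$, the definition \prettyref{eq:kappa} rewrites as $\kappa^2 = 1/(c+1)$.

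For both bounds on $A$, I would invoke \prettyref{lmm:main_inequality} with $(\alpha,\beta) = (\Lambda,\Gamma/\Lambda)$, which directly yields $\Expect[2(1-\phi(z))] \leq 1/\sqrt{c+1} = \kappa$. The upper bound then follows from the pointwise estimate $|2\phi(z)+2z\phi'(z)-1| \leq 1+\sqrt{2(1-\phi(z))}$ of \prettyref{lmm:inequality}, combined with Jensen's inequality:
\begin{equation*}
A \leq 1 + \Expect\!\left[\sqrt{2(1-\phi(z))}\right] \leq 1 + \sqrt{\Expect[2(1-\phi(z))]} \leq 1 + \sqrt{\kappa}.
\end{equation*}
For the lower bound, I split $A = 1 - \Expect[2(1-\phi(z))] + 2\Expect[z\phi'(z)]$ and discard the non-negative last term using \prettyref{lmm:positive} (applicable because $\Gamma = \langle\theta,\theta^*\rangle/\sigma^2 \geq 0$ in the regime of \prettyref{lmm:main}), concluding $A \geq 1 - \kappa$.

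The delicate step will be the $\kappa^3$ factor for $B$, since crude bounds on $\phi'$ combined with Cauchy--Schwarz or a direct application of \prettyref{lmm:main_inequality} after pulling $Z_2^2$ outside only produce $O(\kappa)$. My plan is to use the pointwise inequality $\phi'(z) \leq 2(1-\phi(z))$, immediate from $\phi'(z) = 2\phi(z)(1-\phi(z))$, while keeping the $Z_2^2$ weight inside the expectation, and then to condition on $Z_2$. Re-examining the proof of \prettyref{lmm:main_inequality}, but applied pointwise in $|Z_2|$ with $(\alpha,\beta) \leftarrow (\Lambda|Z_2|, \Gamma|Z_2|/\Lambda)$, gives the conditional subgaussian tail bound $\Expect[2(1-\phi(z))\mid Z_2] \leq \exp(-cZ_2^2/2)$. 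The remaining $Z_2$-integral is then a standard second-moment computation:
\begin{equation*}
\Expect[Z_2^2\phi'(z)] \leq \Expect\!\left[Z_2^2\, e^{-cZ_2^2/2}\right] = \frac{1}{(c+1)^{3/2}} = \kappa^3,
\end{equation*}
via the identity $\Expect[Z^2 e^{-aZ^2/2}] = 1/(a+1)^{3/2}$ for $Z\sim N(0,1)$. Multiplying by the prefactor $2(1+\|\theta^*\|^2/\sigma^2)$ completes the bound $B \leq 2(1+\|\theta^*\|^2/\sigma^2)\kappa^3$. The main conceptual obstacle is precisely this: recognizing that the $Z_2^2$ weight, paired with the conditional Gaussian estimate on $1-\phi(z)$, promotes the decay from $\kappa$ (which is what one gets from integrating out $Z_2$ first) to $\kappa^3$, captured by the $(c+1)^{3/2}$ denominator.
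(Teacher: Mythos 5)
Your proposal is correct and follows essentially the same route as the paper's proof: the same decomposition of $A$ with \prettyref{lmm:positive}, \prettyref{lmm:inequality}, Jensen, and \prettyref{lmm:main_inequality}, and the same treatment of $B$ via $\phi'(z)\leq 2(1-\phi(z))$, conditioning on $Z_2$, and the second-moment identity yielding $(c+1)^{-3/2}=\kappa^3$. The only cosmetic difference is that you make explicit the pointwise conditional subgaussian bound, whereas the paper packages the same computation inside its proof of \prettyref{lmm:main_inequality}.
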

\begin{proof}
By \prettyref{lmm:positive} and \prettyref{lmm:main_inequality},
\begin{align*}
A
& = \expect{2\phi(W \langle \theta, X \rangle/\sigma^2) + 2(W\langle \theta, X \rangle/\sigma^2) \phi^{\prime}(W \langle \theta, X \rangle/\sigma^2)-1} \\ & \geq \expect{2\phi(W \langle \theta, X \rangle/\sigma^2)-1} \\
& \geq 1-\kappa.
\end{align*}
By \prettyref{lmm:inequality}, Jensen's inequality, and \prettyref{lmm:main_inequality},
\begin{align*}
A
& = \expect{2\phi(W \langle \theta, X \rangle/\sigma^2) + 2(W\langle \theta, X \rangle/\sigma^2) \phi^{\prime}(W \langle \theta, X \rangle/\sigma^2)-1} \\ & \leq \expect{1+\sqrt{2(1-\phi(W \langle \theta, X \rangle/\sigma^2))}} \\
& \leq 1+\sqrt{\expect{2(1-\phi(W \langle \theta, X \rangle/\sigma^2))}} \\
& \leq 1+\sqrt{\kappa}.
\end{align*}
By the inequality $ \phi'(z) \leq 2(1-\phi(z)) $ for all $ z \in \mathbb{R} $ and \prettyref{lmm:main_inequality},
\begin{align*}
B
& = 2\expect{(W^2/\sigma^2)\phi^{\prime}(W \langle \theta, X \rangle/\sigma^2)} \\
& \leq 2\expect{2(W^2/\sigma^2)(1-\phi(W \langle \theta, X \rangle/\sigma^2))} \\
& = 2(1+\|\theta^*\|^2/\sigma^2)\Expect_{Z_2}\left[Z^2_2\Expect_{Z_1}\left[2\left(1-\phi\left(\Lambda |Z_2|\left(Z_1+\frac{\Gamma}{\Lambda}|Z_2|\right)\right)\right)\right]\right] \\
& \leq 2(1+\|\theta^*\|^2/\sigma^2)\Expect_{Z_2}\left[Z^2_2\exp\left\{ -\frac{Z^2_2}{2}\frac{\Gamma}{\Lambda}\min\left\{\frac{\Gamma}{\Lambda},\Lambda\right\} \right\}\right] \\
& = 2(1+\|\theta^*\|^2/\sigma^2)\left(\frac{1}{\frac{\Gamma}{\Lambda}\min\left\{\Lambda,\frac{\Gamma}{\Lambda}\right\}+1}\right)^{3/2} \\
& = 2(1+\|\theta^*\|^2/\sigma^2)\kappa^3. \qedhere
\end{align*}
\end{proof}

\begin{lemma} \label{lmm:derivative}
Define
\begin{equation*}
h(\alpha, \beta) = \expect{(2\phi(\alpha |Z_2|(Z_1 + \beta |Z_2|))-1)(|Z_2|(Z_1 + \beta |Z_2|))}.
\end{equation*} 
Let $ \alpha, \beta > 0 $. Then
\begin{equation*}
\frac{\partial}{\partial \alpha} h(\alpha, \beta) \leq \frac{2}{\alpha^2}\left(\frac{1}{\beta\min\{\alpha,\beta\}+1}\right)^{1/4}.
\end{equation*}
\end{lemma}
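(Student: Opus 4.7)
The plan is to differentiate under the expectation and then bound the resulting integrand using the pointwise inequality from \prettyref{lmm:inequality} together with the tail estimate from \prettyref{lmm:main_inequality}. Write $ U = |Z_2|(Z_1 + \beta|Z_2|) $, so that $ h(\alpha,\beta) = \expect{(2\phi(\alpha U) - 1)U} $. A routine application of the dominated convergence theorem (justified because $ \phi' $ is bounded and $ U $ has finite moments of all orders) gives
\begin{equation*}
\frac{\partial}{\partial \alpha} h(\alpha,\beta) = 2\expect{\phi'(\alpha U) \, U^2}.
\end{equation*}

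Next I would invoke the pointwise inequality $ z^2 \phi'(z) \leq \sqrt{2(1-\phi(z))} $ from \prettyref{lmm:inequality} with $ z = \alpha U $. Dividing both sides by $ \alpha^2 $ yields $ U^2 \phi'(\alpha U) \leq \alpha^{-2}\sqrt{2(1-\phi(\alpha U))} $, and therefore
\begin{equation*}
\frac{\partial}{\partial \alpha} h(\alpha,\beta) \;\leq\; \frac{2}{\alpha^2} \expect{\sqrt{2(1-\phi(\alpha U))}}.
\end{equation*}

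Now I would apply Jensen's inequality to the concave function $ \sqrt{\cdot} $ to pull the square root outside the expectation, obtaining
\begin{equation*}
\expect{\sqrt{2(1-\phi(\alpha U))}} \;\leq\; \sqrt{\expect{2(1-\phi(\alpha U))}}.
\end{equation*}
At this point the second conclusion of \prettyref{lmm:main_inequality} applies directly with $ U = |Z_2|(Z_1 + \beta|Z_2|) $, giving $ \expect{2(1-\phi(\alpha U))} \leq (\beta \min\{\alpha,\beta\} + 1)^{-1/2} $. Taking the square root and combining with the factor $ 2/\alpha^2 $ in front yields exactly the claimed bound.

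None of the steps are substantive obstacles; the only thing to be careful about is the justification for interchanging derivative and expectation, which I expect to follow from a standard dominated convergence argument since $ \phi' $ is bounded by $ 1/2 $ and $ \expect{U^2} < \infty $. The key ingredients of the proof are already packaged as \prettyref{lmm:inequality} and \prettyref{lmm:main_inequality}, so this lemma is essentially a two-line composition of those two facts after differentiation.
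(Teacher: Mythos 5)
Your proposal is correct and follows essentially the same route as the paper's proof: differentiate under the expectation, apply the pointwise bound $z^2\phi'(z) \leq \sqrt{2(1-\phi(z))}$ from \prettyref{lmm:inequality} at $z = \alpha U$, pull the square root out by Jensen, and finish with the second conclusion of \prettyref{lmm:main_inequality}. The only difference is that you explicitly justify the interchange of derivative and expectation, which the paper takes for granted.
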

\begin{proof}
First, observe that
\begin{align*}
\frac{\partial}{\partial \alpha} h(\alpha, \beta)
& = \expect{2\phi^{\prime}(\alpha |Z_2|(Z_1 + \beta |Z_2|))(|Z_2|(Z_1 + \beta |Z_2|))^2}.
\end{align*}
By \prettyref{lmm:inequality}, Jensen's inequality, and \prettyref{lmm:main_inequality},
\begin{align*}
& \expect{2\phi^{\prime}(\alpha |Z_2|(Z_1 + \beta |Z_2|))(|Z_2|(Z_1 + \beta |Z_2|))^2} \\
& = \frac{1}{\alpha^2}\expect{2\phi^{\prime}(\alpha |Z_2|(Z_1 + \beta |Z_2|))(\alpha |Z_2|(Z_1 + \beta |Z_2|))^2} \\
& \leq \frac{2}{\alpha^2}\expect{\sqrt{2(1-\phi(\alpha |Z_2|(Z_1 + \beta |Z_2|)))}} \\
& \leq \frac{2}{\alpha^2}\sqrt{\expect{2(1-\phi(\alpha |Z_2|(Z_1 + \beta |Z_2|)))}} \\
& \leq \frac{2}{\alpha^2}\left(\frac{1}{\beta\min\{\alpha,\beta\}+1}\right)^{1/4}. \qedhere
\end{align*}
\end{proof}

\bibliographystyle{plain}
\bibliography{MixtureRegressionEMReferences}

\end{document}